\documentclass[11pt]{article}
\usepackage[]{acl}

\usepackage{hyperref,times,url}
\usepackage{other}
\usepackage{cancel}
\usepackage{inconsolata}
\usepackage[normalem]{ulem}
\usepackage{microtype}
\usepackage[T1]{fontenc}
\usepackage[utf8]{inputenc}

\usepackage[inline]{enumitem}
\setitemize{noitemsep,topsep=0pt,parsep=0pt,partopsep=0pt,leftmargin=*}
\setenumerate{noitemsep,topsep=0pt,parsep=0pt,partopsep=0pt,leftmargin=*}

\title{Prefix Parsing is Just Parsing}

\newcommand{\ETH}[0]{{\normalfont \textsuperscript{1}}}
\newcommand{\MCGILL}[0]{{\normalfont \textsuperscript{2}}}
\newcommand{\CIFAR}[0]{{\normalfont\textsuperscript{3}}}
\newcommand{\MILA}[0]{\normalfont{\textsuperscript{4}}}
\newcommand{\CHIFRO}[0]{{\normalfont \textsuperscript{5}}}

\author{\textbf{\href{https://clementep.github.io/}{\textcolor{black}{Clemente Pasti}}}\ETH\CHIFRO\quad
\textbf{\href{https://opedal.github.io/}{\textcolor{black}{Andreas Opedal}}}\ETH\quad
\textbf{\href{https://todonnell.github.io/}{\textcolor{black}{Timothy J.\@ O'Donnell}}}\MCGILL\CIFAR\MILA\CHIFRO \\
\textbf{\href{https://rycolab.io/authors/ryan/}{\textcolor{black}{Ryan Cotterell}}}\ETH\quad
\textbf{\href{https://timvieira.github.io}{\textcolor{black}{Tim Vieira}}}\ETH
\\
\texttt{
\{\href{mailto:clemente.pasti@inf.ethz.ch}{clemente.pasti}, \href{mailto:andreas.opedal@inf.ethz.ch}{andreas.opedal}, \href{mailto:ryan.cotterell@inf.ethz.ch}{ryan.cotterell}\}@inf.ethz.ch} \\
\texttt{\href{mailto:timothy.odonnell@mcgill.ca}{timothy.odonnell@mcgill.ca}}\quad
\texttt{\href{mailto:tim.f.vieira@gmail.com}{tim.f.vieira@gmail.com}} 
 \\
\ETH ETH Z\"urich\quad
\MCGILL McGill University\quad
\CIFAR Canada CIFAR AI Chair\quad
\MILA Mila\quad
\CHIFRO CHI-FRO
}

\addtocontents{toc}{\protect\setcounter{tocdepth}{-10000}}

\begin{document}

\maketitle

\begin{abstract}
Prefix parsing asks whether an input prefix can be extended to a complete string generated by a given grammar. In the weighted setting, it also provides prefix probabilities, which are central to context-free language modeling, psycholinguistic analysis, and syntactically constrained generation from large language models. We introduce the \emph{prefix grammar transformation}, an efficient reduction of prefix parsing to ordinary parsing. Given a grammar, our method constructs another grammar that generates exactly the prefixes of its original strings. Prefix parsing is then solved by applying any ordinary parsing algorithm on the transformed grammar without modification. The reduction is both elegant and practical: the transformed grammar is only a small factor larger than the input, and any optimized implementation can be used directly, eliminating the need for bespoke prefix-parsing algorithms. We also present a strategy---based on algorithmic differentiation---for computing the next-token weight vector, i.e., the prefix weights of \emph{all} one-token extensions, enabling efficient prediction of the next token. Together, these contributions yield a simple, general, and efficient framework for prefix parsing.\looseness=-1
\begin{center}
\vspace{-2.5 mm}
\raisebox{-0.25em}{\includegraphics[height=1em]{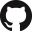}}\,
\texttt{\href{https://github.com/genlm/prefix-acl-26}{genlm/prefix-acl-26}}
\end{center}
\end{abstract}
\vspace{-4mm}
\section{Introduction}
\label{sec:introduction}
Parsing a string with respect to a context-free grammar (CFG) means determining whether---and with what weight---the grammar generates that string. \emph{Prefix} parsing asks the closely related question of whether an input prefix can be extended to a complete string generated by the grammar. In the probabilistic setting,\footnote{\label{footnote:semiring-extension}More generally, our method applies to grammars whose weights lie in any commutative semiring \citep[see, e.g.,][]{goodman-1999-semiring}. This includes the boolean semiring, which is useful for enforcing grammaticality constraints in language models \citep[e.g.,][]{shin-etal-2021-constrained,poesia-etal-2022-synchromesh,loula2025syntactic}.\looseness=-1} these tasks correspond, respectively, to computing (i) the probability that the grammar generates a given string, and (ii) the total probability of strings beginning with a given prefix.\looseness=-1

Existing prefix parsers are typically tied to particular parsing algorithms. For example, \citet{jelinek-lafferty-1991-computation} adapted CKY \citep{Cocke,Kasami,YOUNGER},
while \citet{stolcke-1995-efficient} adapted \citeposs{Earley} algorithm.\footnote{\citet{nowak-cotterell-2023-fast} developed a prefix parser based on CKY with the \emph{hook trick} \citep{eisner-blatz-2007}.} Although these algorithms require bespoke and surprisingly involved modifications to the base parser, they also share notable similarities. This raises a natural question: is there a general-purpose reduction from prefix parsing to parsing, one that would let us adapt \emph{other} parsers to the prefix setting, e.g., \citeposs{valiant1975cfg} subcubic parsing algorithm, GPU implementations,\footnote{E.g., \citet{stanojevic-sartran-2023-synjax}, \citet{rush-2020-torch}, \citet{yi-etal-2011-efficient}, \citet{dunlop-thesis}, and \citet{canny-etal-2013-multi}.} and fast CPU parsers?\footnote{E.g., \citet[][]{marpa} and \citet{dunlop-thesis}.}
We show that such a reduction exists. Given any CFG, we produce a new CFG---the \emph{prefix grammar}---that generates exactly the weighted prefix language of the original, and is only a small constant factor larger. 
Consequently, \emph{any} parsing algorithm serves as a prefix parser off-the-shelf: its input-length dependence is preserved, with the entire overhead confined to grammar-structural factors (\cref{thm:prefix-parsing-runtime}). How much those factors grow depends on the parser's preprocessing pipeline; we analyze them for \IncrCKY and \Earley (\cref{sec:app-prefix-parsers}).\footnote{We found that our implementation of Earley's algorithm with the prefix grammar is comparable to \citeposs{luong2013-parsing-discourse} implementation of Stolcke's algorithm (\cref{sec:EarleyX}).}
Moreover, the prefix grammar can be viewed as the composition of the original grammar with a two-state prefix transducer (\cref{sec:composition}), a perspective that extends naturally to other grammar formalisms.\looseness=-1

\begin{figure}[t]
\centering
\includegraphics[width=1 \linewidth]{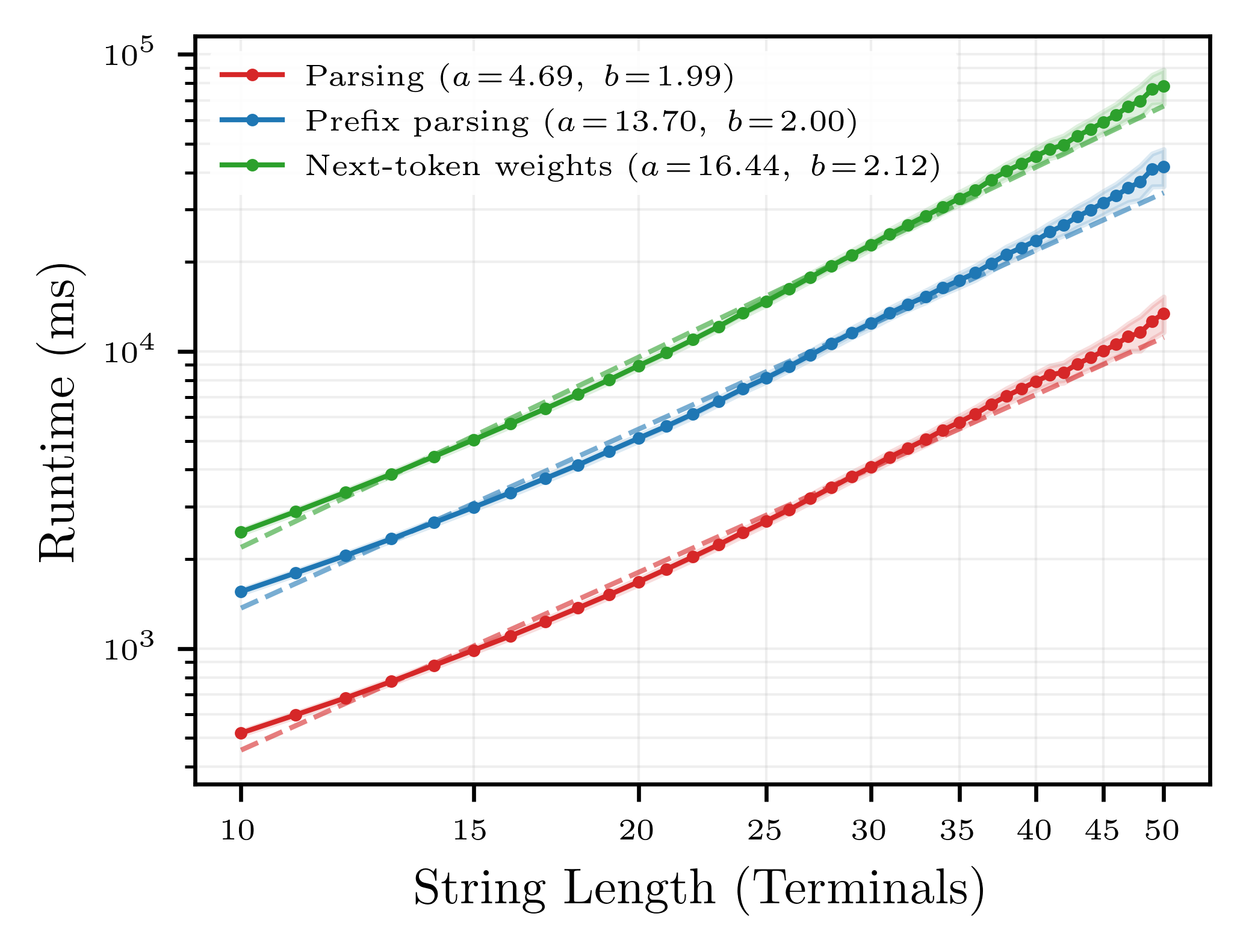}
\vspace{-9 mm}
\caption{\footnotesize 
We compare parsing, prefix parsing, and the next-token weight vector algorithm on WSJ 5000 \citep[\texttt{EarleyX}]{luong2013-parsing-discourse}, a large grammar with $35{,}016$ rules, $448$ nonterminals, and a total size of $116{,}667$. The underlying parser is Earley's (see \cref{alg:earley}), paired with its next-token variant (\cref{sec:next-token-algorithms,sec:app-next-token-earley}). Results are shown for $500$ strings on a log--log plot; to obtain data across all string lengths, we include the runtime for each prefix of every string. Shaded regions represent $95\%$ confidence intervals for the mean runtime at each string length.
To estimate runtime complexity, we fit a power law $r(N) = aN^b$ via least-squares regression on the log--log scale: $\log r(N) = \log a + b\log N$. Dashed lines show the fitted models. All three algorithms exhibit similar fitted complexity exponents ($1.99$, $2$, and $2.12$ respectively), confirming that the $N$-dependence is essentially independent of grammar size. Prefix parsing is roughly $2.9\times$ slower than parsing (fitted coefficient $a = 30.9$ vs.\ $12.1$), closely tracking the prefix grammar's ${\approx}\, 2.86\times$ size increase (binarize, then prefix). Both observations are predicted by \cref{thm:prefix-parsing-runtime}: the $N$-dependence is preserved, and the overhead is absorbed into the grammar-size factor. The next-token algorithm is slower than prefix parsing by only a factor of ${\approx}\, 1.2$, well within the constant-factor bound (i.e., $4\times$) of the next-token meta-theorem (\cref{thm:meta-theorem}).\looseness=-1
}
\looseness=-1
\vspace{-5 mm}
\label{fig:runtime-comparison-three-algorithms}
\end{figure}

Finally, we develop efficient algorithms for computing the \emph{next-token weights} of a string---the prefix weights of all one-token extensions. Surprisingly, our strategy---which leverages algorithmic differentiation \citep[see, e.g.,][]{griewank, eisner-2016-inside} and is closely related to the outside algorithm \citep{insideOutside}---allows us to compute the next-token weights of a string in the same asymptotic runtime as prefix-parsing the string itself. On the other hand, a na\"ive approach would incur a multiplicative factor of $|\alphabet|$, corresponding to $|\alphabet|$ calls to a prefix parser, where $\alphabet$ is the token alphabet. Importantly, next-token weights arise in many applications, including context-free language modeling \citep{jelinek-lafferty-1991-computation}, surprisal-based psycholinguistic modeling \citep{Hale}, and syntactically constrained generation from LLMs \citep{shin-etal-2021-constrained,poesia-etal-2022-synchromesh,loula2025syntactic}.\looseness=-1

\section{Preliminaries\texorpdfstring{\protect\footnote{For convenience, we provide a notation glossary in \cref{sec:notation-table}.}}{}}
\label{sec:background}

\paragraph{Basics.}
Let $\alphabet$ be an \defn{alphabet}, i.e., a finite set of symbols, and let $\strings$ denote the \defn{set of all strings} formed from the symbols of $\alphabet$, including the empty string $\emptystring$.
We write $\str \preceq \strt$ if $\str$ is a prefix of $\strt$, and $\str \strt$ to denote concatenation.
Let $\Tuple{\Weights, +, \cdot, 0, 1}$ be a commutative semiring, i.e., a set equipped with addition and multiplication satisfying the usual distributive and identity laws. For concreteness, the reader may take $\Weights = \Nonnegatives$.
A \defn{weighted language} is a function $\Lang\colon \strings \to \Weights$.
We call $\Lang$ a \defn{language model} when $\Weights = \Nonnegatives$ and $\sum_{\str \in \strings} \Lang(\str) = 1$.

\paragraph{Prefix languages.}
A central concept in this paper is the (weighted) \defn{prefix language} of $\Lang$:
\begin{align}
\label{eq:prefix-language}
\prefixLanguage{\Lang}(\str) \defeq  \sum_{\strt \in \strings} \indicator{\str \preceq \strt} \, \Lang\Parens{\strt}
\end{align}
In the language model setting where $\Weights = \Nonnegatives$, the weight $\Lang(\str)$ of any string $\str$ factorizes as a product of conditional prefix probabilities:
\begin{align}
\label{eq:factorization}
\Lang(\str) = \prefixLanguage{\Lang}( \eos \mid \str) \prod_{t=1}^{|\str|} \prefixLanguage{\Lang}( \chars_t \mid \str_{<t})
\end{align}
where $\str_{<t} \defeq \chars_1 {\cdots} \chars_{t-1}$, $\eos \notin \alphabet$ is a distinguished \defn{end-of-string symbol}, and the \defn{conditional next-token weights} are defined as follows:
\begin{align}
\prefixLanguage{\Lang}( \chars_{t} \mid \str_{<t}) &\defeq \begin{cases}
\frac{\Lang(\str_{<t})}{\prefixLanguage{\Lang}(\str_{<t})} &\textbf{if }\chars_{t}\!=\!\eos \\[4 pt]
\frac{\prefixLanguage{\Lang}(\str_{<t} \chars_{t})}{\prefixLanguage{\Lang}(\str_{<t} )} & \textbf{otherwise} \end{cases} \label{eq:conditional-next-token-weight}
\end{align}
provided that $\prefixLanguage{\Lang}(\str_{<t})>0$ and $\chars_t\!\in\!\alphabet\cup\{ \eos\}$. Note that $\prefixLanguage{\Lang}(\cdot \mid \str_{<t})$ is a distribution over $\alphabet \!\cup\! \{ \eos \}$.
Conditional prefix probabilities underlie context-free language modeling and surprisal-based psycholinguistic analysis. In \cref{sec:next-token-algorithms}, we develop efficient algorithms for computing the prefix weights of all single-symbol extensions simultaneously.

\label{sec:cfg-prelim}
A \defn{weighted context-free grammar}\protect\footnote{We assume the reader is familiar with WCFGs; for completeness, \cref{sec:wcfg-appendix} provides the necessary background.} (\defn{WCFG}, or simply \defn{CFG}) $\grammar$ is a tuple $\wcfgtuple$, where $\nonterms$ is a set of \defn{nonterminal} symbols, $\alphabet$ is an alphabet of \defn{terminal} symbols (disjoint from $\nonterms$), $\start \! \in \! \nonterms$ is a distinguished \defn{start symbol}, and $\rules$ is a finite set of weighted \defn{rules}.\footnote{\label{footnote:duplicate-rules}Without loss of generality, any two rules with the same left-hand side $\ntX$ and right-hand side $\valpha$, $\wproduction{\ntX}{\valpha}{\rw}$ and $\wproduction{\ntX}{\valpha}{\rw'}$, can be consolidated into the single rule $\wproduction{\ntX}{\valpha}{\rw + \rw'}$.\looseness=-1}
Each rule $\arule \in \rules$ is of the form $\wproduction{\ntX}{\valpha}{\rw}$ where $\ntX \in \nonterms$, $\valpha \in \kleene{(\alphabet \cup \nonterms)}$, and $\rw \in \Weights$.
The \defn{arity} of a rule $\arule$, denoted $\arity{\arule}$, is the length of its right-hand side. 
We say that a rule is \emph{nullary} if its arity is zero, \emph{unary} if its arity is one, \emph{binary} if its arity is two, and so on.
The \defn{size} of a grammar is $|\grammar|\eq\sum_{\arule \in \rules} 1+\arity{\arule}$. 
We say that a grammar is in \defn{canonical two-form} (\defn{CTF}) if each of its rules has a right-hand side of length at most two, and \defn{Chomsky normal form} (\defn{CNF}) if each of its rules takes one of the forms $\wproduction{\ntX}{\ntY \ntZ}{}$, $\wproduction{\ntX}{\sym}{}$, or $\wproduction{\start}{\emptystring}{}$ for $\ntX,\ntY,\ntZ \in \nonterms$, $\sym \in \alphabet$.\looseness=-1

A \defn{derivation tree} is a rooted, ordered tree in which each node is labeled with a symbol in $\nonterms \cup \alphabet$, each nonleaf node is connected to its children by a rule in $\rules$, and each leaf is either a terminal or a childless nonterminal (arising from a nullary rule).
The \defn{weight} of a derivation tree is the product of the weights of all rules applied in the tree.
Let $\Derivations{\gramsym}{\str}$ denote the set of derivation trees rooted at $\gramsym \in \nonterms \cup \alphabet$ whose \defn{yield} is $\str$. The \defn{weighted language} of $\gramsym$ is $\wl{\grammar}{\gramsym}(\str) \defeq \sum_{\tree \in \Derivations{\gramsym}{\str}} \weight{\tree}$, and the \defn{weighted language} of $\grammar$ is $\wl{\grammar}{}(\str) \defeq \wl{\grammar}{\start}(\str)$.
The \defn{total weight} of $\gramsym$ is $\totalWeight{\gramsym} \defeq \sum_{\str \in \strings} \wl{\grammar}{\gramsym}(\str)$.
A (tight) \defn{probabilistic context-free grammar} (\defn{PCFG}) is one where $\totalWeight{\gramsym} = 1$ for all $\gramsym \in \nonterms \cup \alphabet$.  If $\grammar$ is a PCFG, then $\wl{\grammar}{}$ is a language model \citep{booth1973,pcfgChi}.\looseness=-1

\section{Prefix Parsing is Just Parsing}
\label{sec:prefix-parsing}
This section introduces the \defn{prefix grammar transformation}, which allows us to turn any CFG parser into a \emph{prefix} parser simply by transforming its grammar into one that generates its prefix language.\footnote{Prefix grammars themselves are not novel.  Indeed, deriving one is an exercise on context-free grammars in \citeposs{bendavCS360Lecture10} lecture notes. However, we were unable to find a \emph{weighted} CFG version; much to our delight, that extension proved straightforward. In this light, our contribution is to demonstrate how the prefix grammar can be used to \emph{reduce} prefix parsing to ordinary parsing (with weights).}\textsuperscript{,}\footnote{Note that the prefix grammar can also be defined by \emph{composition} \citep{BarHillel61,pasti-etal-2023-intersection} of the input grammar and a specifically designed \emph{prefix transducer}; we discuss this view in detail in \cref{sec:composition}.}

\begin{definition}
\label{def:prefix-grammar-optimized}
\begin{subequations}
Given a grammar $\grammar=\wcfgtuple$, its \defn{prefix grammar} is
\[
\prefixGrammar \defeq \langle \nonterms \cup \{ \primeNT{\ntX} \}_{\ntX \in \nonterms } \cup \{ \preStart \}, \alphabet, \preStart, \rules \cup \primeRules \rangle
\]
where $\primeNT{\gramsym}$ denotes $\gramsym$ itself if $\gramsym \in \alphabet$ and a new \emph{prime} nonterminal if $\gramsym \in \nonterms$. Here $\preStart$ is a new start symbol, and the additional rules $\primeRules$ are defined below:\looseness=-1
\begin{flalign}
&\wproduction{\preStart}{\primeNT{\start}}{1} \label{eq:prefix-grammar-optimized-start} \\
&\wproduction{\preStart}{\emptystring}{\totalWeight{\start}}
\label{eq:prefix-grammar-optimized-exit}
\\
&\wproduction{\primeNT{\ntX}}{
\gramsym_1 \cdots \gramsym_{k-1} \, \primeNT{\gramsym_k}
}{\rw \cdot \totalWeight{\gramsym_{k+1}} {\cdots} \totalWeight{\gramsym_{K}} } \label{eq:prefix-grammar-optimized-borderline-exit}\\
&\quad\qquad\annotation{\text{for }\wproduction{\ntX}{\gramsym_1 {\cdots} \gramsym_K}{\rw} \in \rules,\, k \in 1, {\dots}, K}  \nonumber \end{flalign}
\end{subequations}
\end{definition}
\noindent The idea underlying the prefix grammar is that, at each level of the derivation tree, a prefix of a derived string must end inside the yield of exactly one rule application.
This is captured by \cref{eq:prefix-grammar-optimized-borderline-exit}: for each original rule $\wproduction{\ntX}{\gramsym_1 \cdots \gramsym_K}{\rw}$ and each border position $k$, the symbols $\gramsym_1, {\ldots}, \gramsym_{k-1}$ before the border are parsed normally, the symbol $\gramsym_k$ at the border is only partially matched (handled recursively via $\primeNT{\gramsym_k}$), and the symbols $\gramsym_{k+1}, {\ldots}, \gramsym_K$ after the border are summarized by their total weights $\totalWeight{\gramsym_{k+1}} \cdots \totalWeight{\gramsym_K}$.
Recall that $\totalWeight{\gramsym} = 1$ for PCFGs (\cref{sec:cfg-prelim}); for general WCFGs, the total weights can be computed as described in \cref{def:total-weight}.\looseness=-1

The following proposition establishes that \cref{def:prefix-grammar-optimized} correctly encodes the prefix language.\looseness=-1
\begin{restatable}{proposition}{restatablePrefixGrammarCorrectness}
Let $\grammar$ be a CFG, and $\prefixGrammar$ be its prefix grammar. Then, $\prefixGrammar$ correctly encodes the prefix language of $\grammar$ (i.e., $\wl{\prefixGrammar}{} = \prefixLanguage{\wl{\grammar}{}}$).
\label{thm:optimized-prefix-grammar-is-correct}    
\end{restatable}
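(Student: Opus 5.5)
The plan is to prove a stronger, per-nonterminal statement and then read off the proposition from the two start rules. Concretely, I will show that for every $\ntX \in \nonterms$ and every \emph{nonempty} string $\str \in \strings$,
\begin{align*}
\wl{\prefixGrammar}{\primeNT{\ntX}}(\str) \;=\; \sum_{\strt \in \strings} \wl{\grammar}{\ntX}(\str\strt).
\end{align*}
The same identity is trivial for a terminal $\sym$, where $\primeNT{\sym} = \sym$. Here the sum on the right is exactly the part of $\prefixLanguage{\wl{\grammar}{\ntX}}(\str)$ coming from nonempty prefixes.

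Granting this, the proposition is immediate. For $\str \neq \emptystring$, the only $\preStart$-rule whose right-hand side can yield $\str$ is $\wproduction{\preStart}{\primeNT{\start}}{1}$, which gives $\prefixLanguage{\wl{\grammar}{}}(\str)$. For $\str = \emptystring$, every $\primeNT{\ntX}$ yields only nonempty strings, because the border symbol ultimately bottoms out at a terminal. Hence only $\wproduction{\preStart}{\emptystring}{\totalWeight{\start}}$ contributes, and $\totalWeight{\start} = \sum_{\strt}\wl{\grammar}{}(\strt) = \prefixLanguage{\wl{\grammar}{}}(\emptystring)$. Note also that the original rules $\rules$ are untouched in $\prefixGrammar$, so $\wl{\prefixGrammar}{\ntX} = \wl{\grammar}{\ntX}$ for unprimed symbols.

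For the key identity, I will set up a weight-preserving correspondence rather than a plain induction, since nullary and unary cycles can make derivations unbounded. Take a derivation tree $\tree \in \Derivations{\ntX}{\str\strt}$ with $\str$ nonempty, and mark the $|\str|$-th terminal leaf. The path from the root to this leaf (the \emph{spine}) is unique. At each spine node expanded by $\wproduction{\ntY}{\gramsym_1\cdots\gramsym_K}{\rw}$, the spine passes through exactly one child index $k$. The children before $k$ have yields entirely inside $\str$, and those after $k$ entirely inside $\strt$. Replacing each spine node by its primed copy and applying rule \cref{eq:prefix-grammar-optimized-borderline-exit} with that $k$ gives a $\prefixGrammar$-tree for $\primeNT{\ntX}$ yielding $\str$. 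In this new tree the left subtrees are kept verbatim and the right subtrees are deleted. Conversely, every $\prefixGrammar$-tree rooted at $\primeNT{\ntX}$ has this shape: a primed spine ending at a terminal, with ordinary $\grammar$-subtrees hanging to the left. So the map is a surjection whose fibres are indexed by independent choices of right-sibling subtrees, and the corresponding $\strt$ is their concatenated yield. Summing over a fibre and using distributivity (over the possibly infinite sums, which we assume are well defined in the semiring, as they must be for $\totalWeight{\cdot}$ to make sense) turns each deleted sibling $\gramsym_j$ into a factor $\totalWeight{\gramsym_j}$. That is exactly the weight $\rw\cdot\totalWeight{\gramsym_{k+1}}\cdots\totalWeight{\gramsym_K}$ placed on the primed rule.

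The main obstacle is making this bookkeeping rigorous. I must check that the spine decomposition is a genuine bijection between pairs $(\tree, \strt)$ and pairs (prefix tree, tuple of right subtrees). The subtle points are nullable left siblings, which may yield $\emptystring$ and must not be confused with the border child, and the fact that the border child must contribute a nonempty part of $\str$. Choosing the marked leaf as the \emph{last} terminal of $\str$ is precisely what pins down $k$ uniquely and rules out double counting. With that in place, the remaining steps are rearranging the sums over the fibres.
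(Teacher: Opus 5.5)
Your proof is correct. Its outer layer matches the paper's: the same per-symbol strengthening $\wl{\prefixGrammar}{\primeNT{\ntX}}(\str)=\sum_{\strt}\wl{\grammar}{\ntX}(\str\strt)$ for nonempty $\str$, the same remark that unprimed nonterminals keep their original languages, and the same two-case hook-up through the two rules of $\preStart$.

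You differ in how you prove the key identity. The paper inducts on derivation height. It unfolds the root rule of an $\ntX$-rooted tree, introduces a border index $k$, collapses the sums to the right of the border into products of totals, and applies the induction hypothesis to $\gramsym_k$. You instead give a direct, non-inductive, weight-preserving bijection. One side is the set of $\grammar$-trees whose yield extends $\str$. The other is the set of pairs consisting of a primed tree yielding $\str$ and a tuple of right-sibling subtrees. The factors $\totalWeight{\gramsym_j}$ then come from a single use of distributivity over the free right subtrees.

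This buys you two things that the induction handles only loosely.
\begin{enumerate}
\item Heights do not align across the correspondence, since the deleted right subtrees can be arbitrarily tall. A height-$h$ induction therefore really puts height-truncated totals on the primed rules. The paper reconciles these with the true $\totalWeight{\cdot}$ only through a limiting remark in a footnote. Your argument uses the true totals from the start and needs no limit.
\item The paper's reparametrization $\str=\strv{1}\cdots\strv{k}$, $\strt=\strv{k+1}\cdots\strv{K}$ glosses over two facts. The $\str/\strt$ boundary generally falls strictly inside the border child's yield. And nullable children make a boundary-based choice of $k$ ambiguous. Marking the $|\str|$-th terminal leaf fixes $k$ as the child containing the last symbol of $\str$. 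The border child then contributes a nonempty piece of $\str$, and nothing is double counted.
\end{enumerate}
The induction, for its part, stays local and mirrors the grammar's recursive equations, which is convenient if you want to state correctness as a fixed-point identity. Both routes rely on the same assumption: infinite sums must be associative over partitions and distribute over finite products, as in an $\omega$-continuous semiring.

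One wording fix: $\sum_{\strt}\wl{\grammar}{\ntX}(\str\strt)$ is exactly $\prefixLanguage{\wl{\grammar}{\ntX}}(\str)$, not a part of it. The restriction to nonempty $\str$ is needed only because primed nonterminals never yield $\emptystring$.
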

\begin{proof}
See \cref{thm:optimized-prefix-grammar-is-correct-proof}.
\end{proof}
\noindent This provides a straightforward reduction of any ordinary parsing algorithm to a prefix-parsing one.  We say $\Parse$ is a \defn{correct parsing algorithm} if $\Parse(\grammar,\str) = \wl{\grammar}{}{\str}$ for all grammars $\grammar$ and strings $\str$. Although this framing treats $\Parse$ as a black box over arbitrary CFGs, most parsers in the literature are in fact defined only on a restricted class of grammars (e.g., CNF, nullary-free) and rely on an upstream \defn{normal-form conversion} $\phi$ (e.g., CNF conversion for CKY) that is typically treated implicitly; its size cost is frequently understated and propagates through our reduction---we track it explicitly in \cref{thm:prefix-parsing-runtime}. The following theorem shows that our reduction yields a correct prefix-parsing algorithm.
\begin{theorem}[Prefix parsing is just parsing]
Let $\grammar$ be a CFG over the alphabet $\alphabet$. Let $\prefixGrammar$ be its prefix grammar. If $\Parse$ is a correct parsing algorithm, then $\Parse(\prefixGrammar,\str) = \prefixLanguage{\wl{\grammar}{}}(\str)$ for all $\str \in \strings$.\looseness=-1
\end{theorem}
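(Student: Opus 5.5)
The theorem follows by composing two facts already in hand: the definition of a correct parsing algorithm, and \cref{thm:optimized-prefix-grammar-is-correct}. The plan is a two-line chain of equalities. Fix $\str \in \strings$. Since $\prefixGrammar$ is itself a CFG over the alphabet $\alphabet$, and $\Parse$ is assumed correct on \emph{all} grammars, we get
\begin{align*}
\Parse(\prefixGrammar,\str) = \wl{\prefixGrammar}{}(\str).
\end{align*}
By \cref{thm:optimized-prefix-grammar-is-correct}, $\wl{\prefixGrammar}{} = \prefixLanguage{\wl{\grammar}{}}$, so the right-hand side equals $\prefixLanguage{\wl{\grammar}{}}(\str)$, as claimed.

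The only step needing care is checking that $\prefixGrammar$ is a legitimate input to $\Parse$. It must be a finite tuple $\langle \nonterms', \alphabet, \preStart, \rules' \rangle$ with weights in $\Weights$, and its terminal alphabet must be unchanged, so that $\str$ is a string over its terminals.

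\begin{itemize}
\item \textbf{Finiteness of the nonterminals.} The nonterminal set is $\nonterms$ plus one prime copy per nonterminal plus $\preStart$, all disjoint from $\alphabet$.
\item \textbf{Finiteness of the rules.} $\primeRules$ contributes two start rules plus, for each original rule of arity $K$, exactly $K$ rules. Hence it is finite, of size $O(|\grammar|)$.
\item \textbf{Weights in $\Weights$.} Each new weight is $\rw \cdot \totalWeight{\gramsym_{k+1}} \cdots \totalWeight{\gramsym_K}$ or $\totalWeight{\start}$. These lie in $\Weights$ provided the total weights exist there, e.g., trivially for PCFGs where they equal $1$. In general they exist in any semiring where the infinite sums defining $\totalWeight{\cdot}$ are well defined, which is the same standing assumption under which $\prefixLanguage{\wl{\grammar}{}}$ itself is well defined.
\end{itemize}

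I would state this well-definedness assumption explicitly. That is the only potential obstacle; otherwise the theorem is an immediate corollary.

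If $\Parse$ is only defined on a restricted class and is preceded by a weight-preserving normal-form conversion $\phi$, the same argument still applies. The composite $\Parse \circ \phi$ is then a correct parsing algorithm on arbitrary CFGs, and the identity $\wl{\phi(\prefixGrammar)}{} = \wl{\prefixGrammar}{}$ is all that is used. Size overheads are irrelevant to correctness and are deferred to \cref{thm:prefix-parsing-runtime}.
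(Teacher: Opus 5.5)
Your proof is correct and is the same as the paper's, which derives the theorem immediately from the premise that $\Parse$ is correct on all grammars together with \cref{thm:optimized-prefix-grammar-is-correct}. Your extra checks are sensible explicit caveats to a one-line argument: finiteness of $\prefixGrammar$, existence of the total weights in $\Weights$ (which the paper covers by its $\omega$-continuity assumption), and reduction through a semantics-preserving $\phi$.
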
 
\begin{proof}
The theorem immediately follows from \cref{thm:optimized-prefix-grammar-is-correct} and the premises.\looseness=-1
\end{proof}

In \cref{sec:app-prefix-parsers}, we describe specific instantiations of this approach with \IncrCKY and \Earley. The key to the efficiency of our reduction is that we can bound the size of the prefix grammar.
For efficiency, it is advisable to binarize the grammar (i.e., convert to canonical two-form) before applying the prefix grammar transformation. Although binarization triples the grammar size, it bounds the rule arity at two, which in turn keeps the prefix grammar within a small constant factor of the original:\looseness=-1

\begin{restatable}{proposition}{restatableGrammarSizeBound}
\label{prop:optimized-grammar-bound}
Let $\grammar$ be a context-free grammar in canonical two-form and let $\prefixGrammar$ be its prefix grammar (\cref{def:prefix-grammar-optimized}). Then, the size of $\prefixGrammar$ is bounded by
\begin{align}
|\prefixGrammar| \leq \frac{8}{3}|\grammar| +3
\end{align}
\end{restatable}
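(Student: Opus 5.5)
The proof is a direct counting argument over the rules of $\prefixGrammar$, carried out rule by rule and using the size measure $|\grammar| = \sum_{\arule\in\rules} (1+\arity{\arule})$. The prefix grammar contains every original rule in $\rules$, which contributes exactly $|\grammar|$. It also contains the two start rules \cref{eq:prefix-grammar-optimized-start} and \cref{eq:prefix-grammar-optimized-exit}, of sizes $2$ and $1$; together these account for the additive constant $3$. What remains is to bound the contribution of the primed rules \cref{eq:prefix-grammar-optimized-borderline-exit}, and to show it is at most $\frac{5}{3}|\grammar|$.

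For each original rule $\arule = \wproduction{\ntX}{\gramsym_1\cdots\gramsym_K}{\rw}$, I will count the primed rules it generates. There is one primed rule for each $k\in\{1,\dots,K\}$, and the $k$-th one has arity $k$, so size $1+k$. I will split into cases using the CTF assumption $K\le 2$:
\begin{itemize}
\item If $K=0$, the rule has size $1$ and produces no primed rules.
\item If $K=1$, the rule has size $2$ and produces one primed rule of size $2$. The ratio is $1$.
\item If $K=2$, the rule has size $3$ and produces two primed rules, of sizes $2$ and $3$, for a total of $5$. The ratio is $\frac{5}{3}$.
\end{itemize}
In every case the primed contribution is at most $\frac{5}{3}(1+\arity{\arule})$. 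Summing over $\rules$ then gives a primed total of at most $\frac{5}{3}|\grammar|$, and hence $|\prefixGrammar|\le |\grammar|+\frac{5}{3}|\grammar|+3=\frac{8}{3}|\grammar|+3$.

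I expect the main obstacle to be a matter of bookkeeping rather than substance. First, a primed rule $\wproduction{\primeNT{\ntX}}{\gramsym_1\cdots\primeNT{\gramsym_k}}{}$ with $\gramsym_k\in\alphabet$ satisfies $\primeNT{\gramsym_k}=\gramsym_k$. Such a rule could coincide with, or duplicate, another rule; per \cref{footnote:duplicate-rules}, duplicates are merged, which can only decrease the size, so the upper bound still holds. Second, I should check that the weights $\totalWeight{\cdot}$ do not affect size, since size counts only symbols. Finally, I will note that the new nonterminals themselves are not counted in $|\cdot|$. That completes the plan.
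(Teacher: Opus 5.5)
Your proposal is correct and follows essentially the same counting argument as the paper: the original rules contribute $|\grammar|$, the two start rules contribute $3$, and each rule contributes primed rules of total size at most $\tfrac{5}{3}$ times its own size, with the worst case being binary rules (primed rules of sizes $2$ and $3$). Your extra remarks, that merging duplicates can only shrink the grammar and that weights do not affect size, are sound but are not needed for the bound.
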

\begin{proof}
See \cref{prop:optimized-grammar-bound-proof}.
\end{proof}

\noindent \Cref{prop:optimized-grammar-bound} bounds $|\prefixGrammar|$ directly. However, as noted above, most parsers' runtime bounds involve a preprocessed grammar $\phi(\grammar)$ (e.g., CNF conversion for CKY) and often depend on structural parameters beyond total size, such as $|\nonterms|$, which grow differently under $\phi$ (\cref{sec:grammar-transformations}). The following theorem lifts \cref{prop:optimized-grammar-bound} to parser runtime by allowing the runtime to be any polynomial in string length and a vector $\boldsymbol{\nu}$ of grammar-structural parameters.

\begin{restatable}{theorem}{restatablePrefixParsingRuntime} 
\label{thm:prefix-parsing-runtime}
Let $\Parse$ be a parsing algorithm whose runtime after grammar preprocessing by $\phi$ satisfies\looseness=-1
\[ \rt{\Parse(\grammar, \str)} = \bigo{\textstyle\sum_{i=1}^m g_i(\boldsymbol{\nu}(\phi(\grammar))) \cdot h_i(|\str|)} \]
where $\boldsymbol{\nu}$ maps a grammar $\grammar' = \Tuple{\nonterms', \alphabet', \start', \rules'}$ to a vector of its structural parameters (e.g., $\boldsymbol{\nu}(\grammar') = [|\grammar'|, |\nonterms'|, |\rules'|, |\alphabet'|]^\top$), and each $g_i$ and $h_i$ is a polynomial.\footnote{We assume the one-time cost of $\phi$ is amortized over all strings parsed with the same grammar.} Then,
\[ \rt{\Parse(\smash[t]{\prefixGrammar}, \str)} = \bigo{\textstyle\sum_{i=1}^m g_i(\boldsymbol{\nu}(\phi(\smash[t]{\prefixGrammar}))) \cdot h_i(|\str|)} \]
In particular, the $|\str|$-dependence of each term is unchanged; the entire overhead of prefix parsing is captured by replacing $\boldsymbol{\nu}(\phi(\grammar))$ with $\boldsymbol{\nu}(\phi(\prefixGrammar))$.
\end{restatable}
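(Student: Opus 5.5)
The plan is to treat the statement as an instantiation of the runtime hypothesis, using the fact that the hypothesis is quantified over \emph{all} grammars. The runtime bound on $\Parse$ is assumed to hold uniformly for every input grammar $\grammar'$. The prefix grammar $\prefixGrammar$ of \cref{def:prefix-grammar-optimized} is itself an ordinary CFG, so the bound applies to it directly. The proof therefore has three steps, which I would carry out in order.

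\textbf{Step 1: $\prefixGrammar$ is a legitimate input.} I would check that $\prefixGrammar$ is a well-formed WCFG over the same alphabet $\alphabet$. It has finitely many nonterminals ($\nonterms$, the primed copies, and $\preStart$). It has finitely many rules, since each original rule of arity $K$ contributes $K$ primed rules, plus the two start rules. Its weights $\rw\cdot\totalWeight{\gramsym_{k+1}}\cdots\totalWeight{\gramsym_K}$ lie in $\Weights$; for general semirings this assumes the total weights exist, as in \cref{def:total-weight}. Hence $\phi(\prefixGrammar)$ is defined whenever $\phi$ is defined on general CFGs, and $\Parse$ runs on it. Its output is correct by the previous theorem together with \cref{thm:optimized-prefix-grammar-is-correct}.

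\textbf{Step 2: substitute $\grammar' = \prefixGrammar$ into the hypothesis.} This immediately gives
\[ \rt{\Parse(\prefixGrammar,\str)} = \bigo{\textstyle\sum_i g_i(\boldsymbol{\nu}(\phi(\prefixGrammar)))\, h_i(|\str|)}. \]
The input string $\str$ is unchanged, because the reduction never modifies the string, only the grammar. The functions $h_i$ are therefore evaluated at the same $|\str|$, which proves the ``in particular'' clause. Following the footnote, the one-time costs of constructing $\prefixGrammar$ and computing the $\totalWeight{\cdot}$ values are amortized. Both are independent of $\str$: construction takes time linear in $|\grammar|$ by \cref{prop:optimized-grammar-bound}, plus the cost of computing total weights.

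\textbf{Step 3: the main obstacle, uniformity of the big-O constant.} The hidden constant in the $\bigo{\cdot}$ must not depend on the particular grammar. Otherwise, specializing the hypothesis to $\prefixGrammar$ is not legitimate. I would make this explicit by reading the hypothesis as the existence of a constant $c$ such that $\rt{\Parse(\grammar',\str)} \le c\sum_i g_i(\boldsymbol{\nu}(\phi(\grammar')))h_i(|\str|)$ for all $\grammar'$ and $\str$, possibly beyond some threshold. That is the standard meaning of a parser runtime bound. With this reading, the conclusion is immediate and carries the same constant $c$.

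If one also wants the overhead in terms of $\grammar$ itself, the route would be as follows. First invoke \cref{prop:optimized-grammar-bound}, which gives $|\prefixGrammar|\le\frac83|\grammar|+3$ for grammars in canonical two-form. The nonterminal count at most doubles plus one, and the rule count is at most $1+\sum_{\arule}\max(1,\arity{\arule})+1$. Then use the fact that the $g_i$ are polynomials, hence monotone on nonnegative arguments when their coefficients are nonnegative, to conclude a constant-factor blowup. The theorem as stated does not require this step, since the growth under $\phi$ is analyzed separately in \cref{sec:grammar-transformations}.
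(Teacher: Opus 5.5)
Your proposal is correct and follows the paper's argument exactly. The paper's proof is the single line ``apply $\Parse$'s runtime guarantee with input $\prefixGrammar$ in place of $\grammar$,'' and your Steps 1--3 only spell out what that line uses: $\prefixGrammar$ is a well-formed input, and the hidden constant is uniform over grammars.

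There are two slips in your asides, neither of which the theorem needs. First, \cref{prop:optimized-grammar-bound} gives linear size only for grammars in canonical two-form; for general grammars each rule of arity $K$ yields prime rules of total size $\Theta(K^2)$. Second, $\prefixGrammar$ has at most $|\rules| + \sum_{\arule}\arity{\arule} + 2$ rules, not $2+\sum_{\arule}\max(1,\arity{\arule})$, because the original rules are kept.
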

\begin{proof}
Apply $\Parse$'s runtime guarantee with input $\prefixGrammar$ in place of $\grammar$.
\end{proof}

\noindent Observe that \cref{thm:prefix-parsing-runtime} makes no claim about the magnitude of $\rt{\Parse(\grammar, \str)}$ itself; that is a property of the parser. For example, \IncrCKY runs in $\bigo{|\grammar| \cdot |\str|^3}$ on CNF grammars, which fits the theorem with $\phi$ set to CNF conversion. Because the unary-removal step of $\phi$ can inflate grammar size by a factor of $|\nonterms|$ (\cref{sec:grammar-transformations}), applying \IncrCKY to $\prefixGrammar$ runs in $\bigo{|\phi(\smash[t]{\prefixGrammar})| \cdot |\str|^3}$---a factor of $|\nonterms|$ looser than the bound for $\grammar$ (\cref{sec:prefix-cky}). Earley's lighter preprocessing incurs less blowup (\cref{sec:prefix-earley}).

\section{Next-Token Weight Vector Algorithms}\label{sec:next-token-algorithms}
\noindent Given a string $\str= \chars_1 {\cdots} \chars_N$ and a CFG $\grammar$, we now study how to efficiently compute the \defn{next-token weight vector} $\nextTokenWeights{\sym}{\str} = \prefixLanguage{\wl{\grammar}{}}(\str\sym)$, whose components give the prefix weight of each one-symbol extension $\str\sym$ for $\sym \in \alphabet$. This vector is the key ingredient for the incremental processing applications discussed in \cref{sec:introduction}, including constrained generation with a PCFG.
Importantly, by \cref{thm:optimized-prefix-grammar-is-correct}, computing the next-token weight vector reduces to ordinary parsing: $\nextTokenWeights{\sym}{\str} = \wl{\prefixGrammar}{}{\str\sym}$. Yet a na\"ive implementation would still require $|\alphabet|$ parser calls, one per $\sym \in \alphabet$, i.e., $\bigo{|\alphabet| \cdot \rt{\Parse(\smash[t]{\prefixGrammar}, \str)}}$ time per next-token vector $\nextTokenWeights{}{\str}$.

A less na\"ive approach is to use an \defn{incremental parsing algorithm}---one whose evaluation on a prefix $\chars_1 {\cdots} \chars_{N-1}$ leaves enough cached information to extend to $\chars_1 {\cdots} \chars_N$ cheaply.
We refer to this cached information as the \defn{prefix state} and assume it is managed by memoization, so that the same prefix is parsed at most once. The precise representation of the prefix state is parser-specific; for incremental CKY (\cref{alg:incremental-cky} in \cref{sec:app-prefix-parsers}) it is the portion of the parse chart filled in for $\chars_1 {\cdots} \chars_{N-1}$. Non-incremental CKY on $\prefixGrammar$ runs in $\bigo{|\phi(\smash[t]{\prefixGrammar})|N^3}$; incremental CKY amortizes this cost across prefixes, extending the parse by one token in only $\bigo{|\phi(\smash[t]{\prefixGrammar})|N^2}$. Immutability lets us safely reuse the prefix state across all $|\alphabet|$ candidate one-token extensions of $\chars_1 {\cdots} \chars_{N-1}$, as the cached entries from the prefix do not depend on the choice of next token. This reduces the baseline to $\bigo{|\phi(\smash[t]{\prefixGrammar})|N^2 \cdot |\alphabet|}$ per token---an improvement by a factor of $N$---but the $|\alphabet|$ factor remains.\looseness=-1

In contrast, we propose an approach that eliminates the $|\alphabet|$ factor entirely, computing the full vector $\nextTokenWeights{}{\str}$ in $\bigo{\rt{\Parse(\smash[t]{\prefixGrammar}, \str)}}$ time per next-token vector---the same asymptotic cost as a single parser call---by combining two key ingredients, \emph{lattice parsing} and \emph{algorithmic differentiation}, which we introduce next. Our approach applies to any parser that supports lattice input and whose evaluation consists of semiring operations---mild requirements satisfied by standard algorithms, such as incremental CKY and Earley (see \cref{sec:app-prefix-parsers}). Moreover, when the base parser is incremental, our next-token algorithm inherits its per-prefix amortization.

\paragraph{An algorithmic differentiation reduction.}
We start by defining the aggregation function:
\begin{align}
\partition{\str}(\parameters) \defeq \sum_{\sym \in \alphabet} \nextTokenWeights{\sym}{\str} \cdot \param_{\sym}
\label{eq:partition-parameter}
\end{align}
where $\parameters \in \Weights^{\alphabet}$ is a vector of formal variables, introduced solely to enable differentiation. 
Because this sum is linear in $\parameters$, its gradient immediately yields the next-token weight vector:\footnote{The gradient here is \emph{algebraic}: $\nabla[{\color{ValueColor}x} + {\color{ValueColor}y}] = \nabla[{\color{ValueColor}x}] + \nabla[{\color{ValueColor}y}]$ and $\nabla[{\color{ValueColor}x} \cdot {\color{ValueColor}y}] = \nabla[{\color{ValueColor}x}] \cdot {\color{ValueColor}y} + {\color{ValueColor}x} \cdot \nabla[{\color{ValueColor}y}]$, requiring only semiring operations \citep[see, e.g.,][]{omega-continuous}.}
\begin{align}
\nabla_{\parameters} \partition{\str}(\parameters) = \nextTokenWeights{}{\str} \qquad \forall \parameters \in \Weights^{\alphabet}
\label{eq:partition-parameter-derived}
\end{align}
This reduction is productive because there is an efficient method for computing $\partition{\str}(\parameters)$---namely, \emph{lattice parsing}, which replaces the input string with a compact automaton encoding all $|\alphabet|$ one-symbol extensions at once.
Moreover, thanks to algorithmic differentiation, we have the following \emph{meta-theorem} that ensures the computation of the gradient
$\nabla_{\parameters} \partition{\str}(\parameters)$ is equally efficient.\looseness=-1
\begin{theorem}[Next-token meta-theorem]
\label{thm:meta-theorem}
For any algorithm
$\LatticeParser$ that correctly computes $\partition{\str}(\parameters)$ and whose evaluation is a sequence of semiring operations,\footnote{Control flow (for-loops, if-statements) is permitted provided it does not depend on $\parameters$---this ensures the computed function is a polynomial in $\parameters$, so symbolic differentiation applies.} we can use reverse-mode algorithmic differentiation to derive the gradient algorithm $\Backwards{\LatticeParser}$ that correctly computes $\nabla_\parameters \partition{\str}(\parameters) = \nextTokenWeights{}{\str}$ using at most $4$ times as many arithmetic operations as $\LatticeParser$.\looseness=-1
\end{theorem}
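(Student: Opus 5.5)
We will reduce the statement to the classical cheap-gradient principle of reverse-mode algorithmic differentiation (the Baur--Strassen theorem, in the form given by \citet{griewank}), checking that each ingredient remains valid over an arbitrary commutative semiring. First, we view the execution of $\LatticeParser$ on fixed $\str$ as a straight-line program. By the footnote's assumption, control flow does not depend on $\parameters$, so unrolling all loops and branches gives a fixed sequence of assignments $v_j \defeq v_a \oplus v_b$ with $\oplus \in \{+,\cdot\}$. Each operand is an input variable $\param_\sym$, a constant (a rule weight or a total weight), or an earlier intermediate. The final variable $v_L$ is therefore a polynomial in $\parameters$ that equals $\partition{\str}(\parameters)$ identically, because $\LatticeParser$ is correct for every $\parameters$. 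Consequently its algebraic gradient, defined by the rules in the footnote to \cref{eq:partition-parameter-derived}, is $\nextTokenWeights{}{\str}$ by \cref{eq:partition-parameter-derived}. A subtle point here is that over a general semiring, a function that agrees pointwise need not agree as a formal polynomial. We will avoid this issue by taking $\parameters$ to be formal variables, as the paper stipulates, so that the program computes an element of the polynomial semiring $\Weights[\parameters]$.

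Second, we construct $\Backwards{\LatticeParser}$. It consists of the forward sweep, which stores every $v_j$, followed by a backward sweep that maintains adjoints $\bar v_j$. We initialize $\bar v_L \defeq 1$ and all other adjoints to $0$. We then process assignments in reverse order. For $v_j = v_a + v_b$ we perform $\bar v_a \mathrel{+}= \bar v_j$ and $\bar v_b \mathrel{+}= \bar v_j$. For $v_j = v_a \cdot v_b$ we perform $\bar v_a \mathrel{+}= \bar v_j \cdot v_b$ and $\bar v_b \mathrel{+}= \bar v_j \cdot v_a$. These updates use only semiring operations; commutativity is needed so that $\nabla[xy] = \nabla x \cdot y + x \cdot \nabla y$ is symmetric, and it lets us avoid tracking the order of multiplication. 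Correctness means that at termination $\bar\param_\sym = \partial v_L / \partial \param_\sym$. We prove this by reverse induction, showing that after processing step $j$, each $\bar v_i$ equals the sum over all paths from $v_i$ to $v_L$ through steps $\ge j$ of the product of local partials. This is the chain rule, and it relies only on distributivity, so it holds in any commutative semiring.

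Third, we count operations. The forward sweep costs one operation per assignment. In the backward sweep, an addition step costs at most $2$ additions and a multiplication step costs at most $2$ multiplications plus $2$ additions. The total is therefore at most $1 + 4$ operations per original one. To reach the stated factor of $4$, we need a sharper accounting: we charge the forward sweep as the baseline, skip accumulations into constants, and note that the first accumulation into a zero adjoint is an assignment rather than an addition. This gives at most $3$ extra operations per original operation, so the total is at most $4$ times the cost of $\LatticeParser$. We expect this constant bookkeeping to be the main obstacle, and we will follow Griewank's standard bound, which counts multiplications as the dominant cost. The semiring correctness of the adjoint recursion, by contrast, is routine.
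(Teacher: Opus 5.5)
Your proposal is correct and takes the same route as the paper, whose proof is simply Griewank's cheap gradient principle (the $4\times$ bound) plus the remark that reverse mode uses only $+$ and $\cdot$, so the operation count carries over from the reals to any commutative semiring; your explicit adjoint sweep, path-sum correctness argument, and formal-variable caveat make that transfer more careful than the paper does. One slip in your own bookkeeping: the ``first accumulation is an assignment'' saving is per variable, not per operation (a multiplication whose operands already hold adjoint contributions still costs $2$ multiplications and $2$ additions backward), so ``at most $3$ extra per operation'' holds only in aggregate---after pruning dead code, $L$ operations produce at most $2L$ adjoint contributions landing on at least $L$ distinct variables (every intermediate but the output, plus at least one parameter), leaving at most $L$ backward additions beside at most $2L$ backward multiplications---though you, like the paper, ultimately defer to Griewank's bound anyway.
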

\begin{proof}
The proof follows from \cref{eq:partition-parameter-derived} and the cheap gradient principle \citep[Eq.\ 3.14]{griewank}, which bounds the gradient computation at $4\times$ the original function's operation count. Since this bound counts arithmetic operations irrespective of their interpretation, it carries over from $\R$ to any commutative semiring $\Weights$.
\end{proof}

\paragraph{Lattice parsing.}
\label{sec:lattice-parsing}
Next, we show how to efficiently compute $\partition{\str}(\parameters)$ via lattice parsing. A \defn{word lattice} \citep{hall2005best} is an acyclic weighted finite-state automaton (WFSA; see \cref{sec:wfsa-appendix}) representing a weighted set of strings. A \defn{lattice parser} $\LatticeParser$ is an algorithm that, given a grammar $\grammar$ and a word lattice $\lattice$, computes the following product:\footnote{Lattice parsers are routine extensions of ordinary parsers \citep[\S3.2]{hall2005best}; ordinary parsing is the special case where the lattice contains the single string $\str$ with weight one.\looseness=-1
}
\begin{align}
\LatticeParser(\grammar,\lattice) = \smashoperator{\sum_{\str \in \strings}} \wl{\lattice}{}{\str} \!\cdot\! \wl{\grammar}{}{\str}\!
\label{eq:lattice-parser}
\end{align}
That is, the lattice parser computes the inner product of the lattice's and grammar's weighted languages. Efficient lattice parsers can be derived by minimally modifying existing parsing algorithms like Earley and CKY \citep{hall2005best}.\looseness=-1

Our approach to computing the \emph{aggregation function} $\partition{\str}(\parameters)$ is based on parsing the next-token lattice defined below.
\begin{definition}
\label{def:lattice}
Let $\str\in \kleene{\alphabet}$  and $\parameters \in \Weights^{\alphabet}$. The \defn{next-token lattice} $\lattice_{\str}(\parameters)$ is the WFSA below:
\begin{center}
\begin{tikzpicture}[>=stealth,auto] 
\tikzset{
  initial text={},
  every edge/.style={font=\scriptsize,draw},
  every state/.style={
    rectangle,
    rounded corners=4pt,
    draw,
    minimum height=1.em,
    minimum width=1.em,
    inner sep=3pt,
    align=center,
    font=\scriptsize
  }  
}
\node[state, initial] (s1) {$0$};
\node (dots) [right=of s1] {$\cdots$};
\node[state] (sn) [right=of dots]{$N$};
\node[state, accepting] (sn+) [right=2.1 cm of sn] {$N{+}1$};
\path[->] 
(sn) edge node {
$\{\sym /\param_{\sym}\}_{\sym \in \alphabet}$
} (sn+)
(s1) edge node {$\str_1/1$}  (dots)
(dots) edge  node {$\str_N/1$}  (sn)
;
\end{tikzpicture}
\end{center}
\noindent where the edge labeled $\{\sym /\param_{\sym}\}_{\sym \in \alphabet}$ is shorthand for a set of transitions---one for each symbol $\sym \in \alphabet$.
\end{definition}
\noindent Since the lattice accepts exactly the strings $\str\sym$ for $\sym \in \alphabet$, each weighted by $\param_{\sym}$, parsing it with the prefix grammar recovers the aggregation function:
\begin{align}
\LatticeParser(\prefixGrammar,\nextTokenLattice{\str})= \partition{\str}(\parameters)    
\end{align}

\paragraph{Putting it together.}
We now combine the three ingredients---the prefix grammar (\cref{def:prefix-grammar-optimized}), the next-token lattice (\cref{def:lattice}), and the next-token meta-theorem (\cref{thm:meta-theorem})---into a complete next-token weight vector algorithm.
Let $\grammar$ be a CFG with prefix grammar $\prefixGrammar$, and let $\str = \chars_1 {\cdots} \chars_N$ be a string. Suppose that $\LatticeParser$ is an algorithm whose evaluation consists of a sequence of semiring operations. Then, by the next-token meta-theorem (\cref{thm:meta-theorem}), there exists a gradient algorithm $\Backwards{\LatticeParser}$ that computes the next-token weight vector $\nextTokenWeights{}{\str} \!=\! \nabla_{\parameters}\partition{\str}(\parameters)$ with the same asymptotic complexity as one evaluation of $\LatticeParser(\prefixGrammar,\nextTokenLattice{\str})$.
We present two concrete instantiations of this framework in the appendix: one based on CKY (\cref{sec:app-next-token-cky}) and one based on Earley's algorithm (\cref{sec:app-next-token-earley}).
The empirical results displayed in \cref{fig:runtime-comparison-three-algorithms}---which use the Earley-based instantiation---confirm that computing $\nextTokenWeights{}{\str}$ is slower than prefix parsing by only a small constant factor.
Finally, in many applications, we also need the total weight of a string $\wl{\grammar}{}{\str}$ alongside its prefix weight.\footnote{Define $\grammarEos \defeq \Tuple{ \nonterms, \alphabet \cup \{ \eos \}, \start', \rules \cup \{ \wproduction{\start'}{\start \, \eos}{1} \} }$ where $\start' \notin \nonterms$. Since $\wl{\grammarEos}{}{\str\, \eos} = \wl{\grammar}{}{\str}$, applying the next-token weight vector algorithm to the prefix grammar $\prefixLanguage{\grammarEos}$ yields $\wl{\grammar}{}{\str}$ as the $\eos$ component of the vector, alongside all other next-token weights. The transformation is efficient: it adds a single rule, so $|\grammarEos| = |\grammar| + 3$.}\looseness=-1

\section*{Conclusion}
We showed that prefix parsing is just ordinary parsing---applied to the prefix grammar, a compact CFG encoding of the prefix language. Given any CFG, we construct a new CFG whose ordinary parser computes prefix weights of the original. Unlike previous approaches that extend existing parsing algorithms with complex modifications, our reduction provides a unified recipe for prefix parsing.
Additionally, using algorithmic differentiation, we provide the fastest known algorithm for computing the next-token weight vector.
With our reduction in hand, there is no longer a need to develop bespoke prefix-parsing algorithms. A natural direction for future work is to develop a generalized notion of a \emph{prefix transformation} that extends to other formalisms, such as context-sensitive grammars.\looseness=-1

\section*{Limitations}
The limitations of our approach are discussed in context throughout the main text; however, we would like to remind that one of the main limitations of our work is that, in the general case, \cref{thm:prefix-parsing-runtime} does not guarantee that a prefix parser has the same asymptotic runtime as the underlying base parser. Additionally, we would like to recall that our prefix parser works with any parser of choice, however we only ran experiments with CKY and Earley's.

\section*{Acknowledgments}
We thank \href{https://www.cs.jhu.edu/~jason/}{Jason Eisner} and \href{https://franznowak.github.io/}{Franz Nowak} for useful discussions. 
We also thank \href{https://joaoloula.github.io/}{Jo\~{a}o Loula}, \href{https://benlebrun.github.io/about/}{Ben LeBrun}, and \href{https://github.com/leo-du}{Leo Du} for early adoption of our method \citep[see][]{loula2025syntactic}. We used generative AI to improve our writing and to help debug the code. Every modification introduced by generative AI was carefully reviewed by the authors, which take full responsibility for it.

\bibliography{main}

\appendix

\onecolumn

\addcontentsline{toc}{section}{Appendix}

\begingroup
\addtocontents{toc}{\protect\setcounter{tocdepth}{2}}
\setcounter{tocdepth}{2}  \renewcommand{\contentsname}{Appendix Contents}
\tableofcontents
\endgroup

\clearpage
\section{Notation Glossary}
\label{sec:notation-table}
\begin{table}[h]\footnotesize
\centering
\begin{tabular}{@{}ll@{}}
\hline
\textbf{Symbol} & \textbf{Description} \\
\hline
\multicolumn{2}{@{}l@{}}{\textit{Strings and languages (\cref{sec:background})}} \\
$\alphabet$ & alphabet (finite set of terminal symbols) \\
$\strings$ & set of all strings over $\alphabet$ \\
$\emptystring \in \strings$ & empty string \\
$\str, \strt \in \strings$ & strings \\
$\chars, \sym \in \alphabet$ & terminal symbols \\
$\kleeneplus{\alphabet}$ & set of nonempty strings over $\alphabet$ \\
$\preceq$ & prefix relation ($\str \preceq \strt$ iff $\str$ is a prefix of $\strt$) \\
$\eos \notin \alphabet$ & end-of-string symbol \\
$\Tuple{\Weights, +, \cdot, 0, 1}$ & commutative semiring of weights \\
$\Lang \colon \strings \to \Weights$ & weighted language \\
$\prefixLanguage{\Lang} \colon \strings \to \Weights$ & prefix language of $\Lang$ \\
\hline
\multicolumn{2}{@{}l@{}}{\textit{Grammars (\cref{sec:cfg-prelim}, \cref{sec:wcfg-appendix})}} \\
$\grammar = \wcfgtuple$ & weighted context-free grammar (WCFG) \\
$\nonterms$ & set of nonterminal symbols \\
$\start \in \nonterms$ & start symbol \\
$\rules$ & finite set of weighted rules \\
$\gramsym \in \nonterms \cup \alphabet$ & grammar symbol (nonterminal or terminal) \\
$\valpha \in \kleene{(\nonterms \cup \alphabet)}$ & rule right-hand side \\
$\arule, (\wproduction{\ntX}{\valpha}{\rw}) \in \rules$ & rule; $\ntX$ rewrites to $\valpha$ with weight $\rw \in \Weights$ \\
$|\grammar| \in \mathbb{N}$ & grammar size \\
$\arity{\arule} \in \mathbb{N}$ & arity of rule $\arule$ \\
$\tree \in \Derivations$ & derivation tree of $\grammar$ \\
$\tree \in \Derivations{\gramsym}\subseteq \Derivations{}$ & $\gramsym$-rooted derivation tree \\
$\Derivations{\gramsym}{\str} \subseteq \Derivations{\gramsym}$ & set of derivation trees rooted at $\gramsym$ with yield $\str$ \\
$\yield{\tree} \in \strings$ & yield of derivation tree $\tree$ \\
$\weight{\tree} \in \Weights$ & weight of derivation tree $\tree$ \\
$\wl{\grammar}{}{\str} \in \Weights$ & weighted language of $\grammar$: total weight of $\str$ \\
$\totalWeight{\gramsym} \in \Weights$ & total weight of symbol $\gramsym \in \alphabet \cup \nonterms$ \\
\hline
\multicolumn{2}{@{}l@{}}{\textit{Prefix grammar (\cref{sec:prefix-parsing})}} \\
$\prefixGrammar$ & prefix grammar of $\grammar$ \\
$\primeNT{\ntX}$ & prime nonterminal (partially matched $\ntX \in \nonterms$, $\primeNT{\ntX} \notin \nonterms$) \\
$\preStart$ & prefix start symbol ($\preStart \ne \start$, $\preStart \notin \nonterms$) \\
\hline
\multicolumn{2}{@{}l@{}}{\textit{Parsers and preprocessing (\cref{sec:prefix-parsing}, \cref{sec:lattice-parsing})}} \\
$\phi$ & grammar preprocessing function (e.g., $\EnsureCNF$; see \cref{sec:grammar-transformations}) \\
$\Parse$ & abstract parsing algorithm \\
$\LatticeParser$ & abstract lattice parser \\
\hline
\multicolumn{2}{@{}l@{}}{\textit{Next-token algorithms (\cref{sec:next-token-algorithms})}} \\
$\nextTokenWeights{}{\str} \in \Weights^{\alphabet}$ & next-token weight vector \\
$\nextTokenWeights{\sym}{\str} \in \Weights$ & next-token weight for terminal $\sym$ \\
$\parameters \in \Weights^{\alphabet}$ & formal parameter vector \\
$\param_{\sym} \in \Weights$ & parameter for terminal $\sym$ \\
$\partition{\str}(\parameters) \in \Weights$ & aggregation function \\
$\lattice$ & word lattice (acyclic WFSA) \\
$\nextTokenLattice{\str}$ & next-token lattice \\
$\z(i, \ntX) \in \Weights$ & forward value \\
$\dz(i, \ntX) \in \Weights$ & backward value \\
$\inside{}{k}{i,\ntX} \in \Weights$ & inside weight \\
\hline
\end{tabular}
\end{table}

\clearpage

\clearpage
\section{Additional Experimental Results}
\label{sec:EarleyX}
\begin{figure*}[h]
\centering
\begin{subfigure}[b]{0.49\linewidth}
    \includegraphics[width=\linewidth]{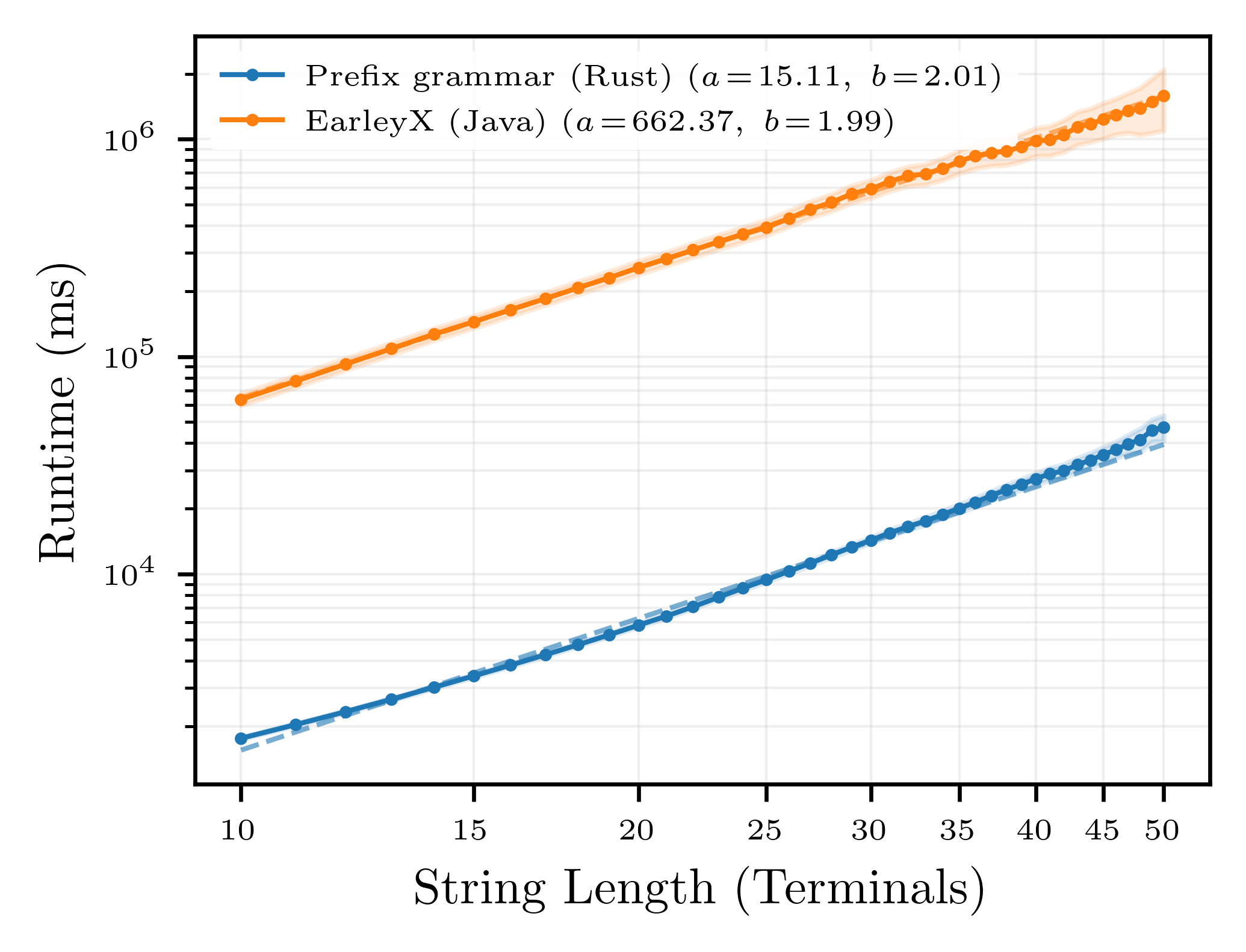}
    \label{fig:runtime-comparison}
\end{subfigure}
\begin{subfigure}[b]{0.49\linewidth}
    \includegraphics[width=\linewidth]{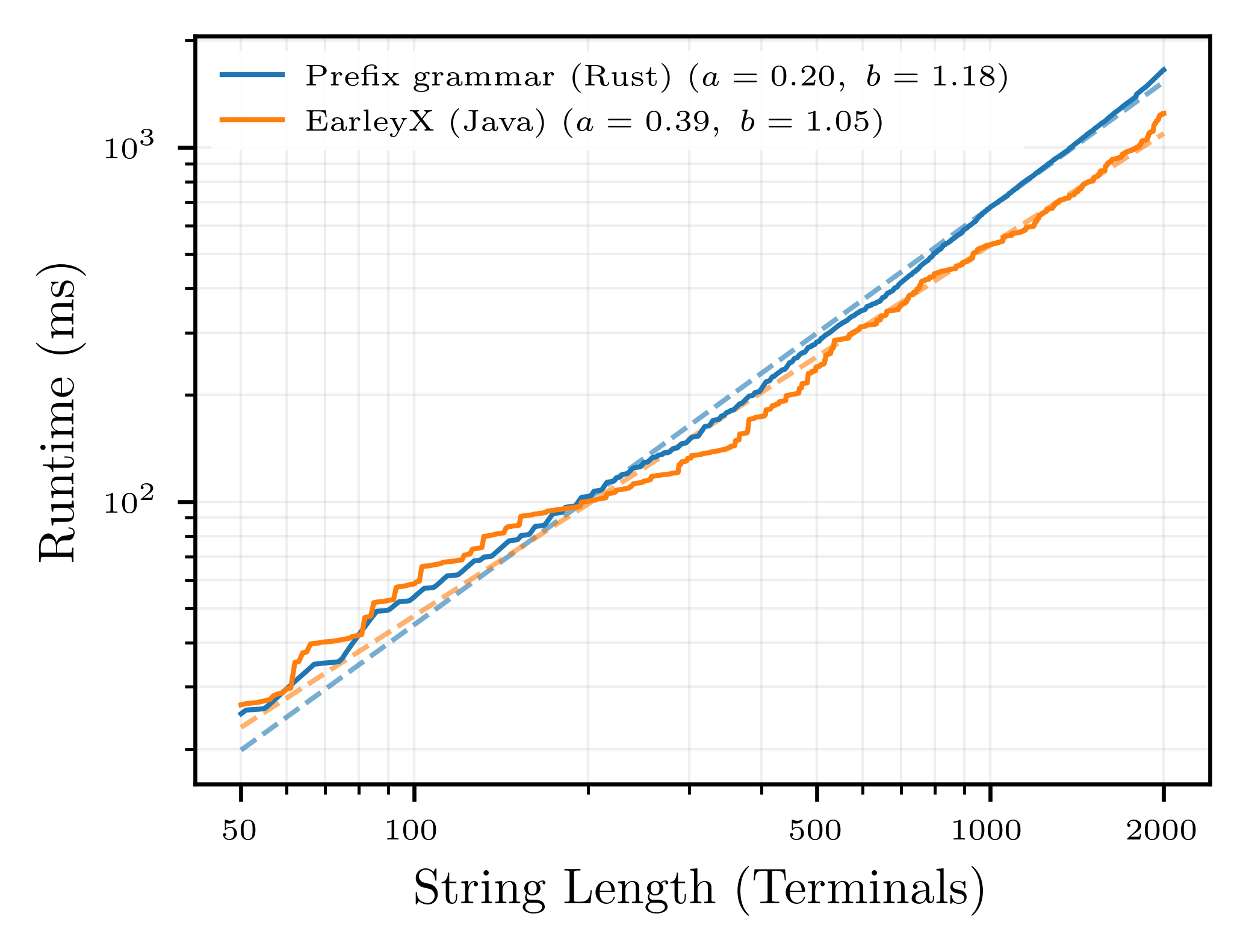}
    \label{fig:earleyx-comparison}
\end{subfigure}
\vspace{-0.5 cm}
\caption{\footnotesize
\textbf{Left: Our prefix parser vs.\ EarleyX {\normalfont\citep{luong2013-parsing-discourse}}.} We compare the two implementations on WSJ 5000 and 500 sentences. Our algorithm uses Earley's with the prefix grammar (\cref{alg:earley-prefix-parser}), while EarleyX uses \citeposs{stolcke-1995-efficient} algorithm to compute prefix probabilities (itself an adaptation of Earley's). As in \cref{fig:runtime-comparison-three-algorithms}, we fit a power law $r(N) = aN^b$ via least-squares regression on the log--log scale. The fitted slopes are nearly identical ($2.01$ vs.\ $1.99$), while the fitted coefficient is substantially better in our implementation. This is consistent with the fact that Stolcke's algorithm pays an additional multiplicative factor of $|\rules|$, which our Earley implementation avoids via an optimization at the completion step (\cref{alg:earley}).
\textbf{Right: Sparse grammar comparison.}
We repeat the experiment on the \emph{Social Discourse} grammar \citep[\texttt{EarleyX}]{luong2013-parsing-discourse}, a sparse grammar with 72{,}712 rules and 233 nonterminals. The graph reports the runtime for each prefix of a sequence of concatenated sentences (\texttt{socialall.ortho.yld.concat}). Both parsers achieve near-linear time on this sparse grammar (fitted exponents $1.05$ for EarleyX and $1.18$ for our implementation). For this experiment, we used the specialized \texttt{EarleyParserSparse} from \citet{luong2013-parsing-discourse}, which is highly optimized for sparse grammars.
}
\looseness=-1
\vspace{-5 mm}
 \label{fig:earleyx-comparison-two-grammars}
\end{figure*}

\clearpage
\section{Additional Background}
\subsection{Weighted Context-Free Grammars}
\label{sec:wcfg-appendix}

Let $\grammar = \wcfgtuple$ be a CFG. We develop the weighted language $\wl{\grammar}{}$ (and a few other useful concepts) by first defining (weighted) derivation trees, then summing over trees with a given yield. A \defn{derivation tree} $\tree$ is a rooted, ordered tree satisfying: (i) each node is labeled with a symbol in $\nonterms \cup \alphabet$, (ii) each nonleaf node is connected to its children by a rule in $\rules$, and (iii) each leaf is either a terminal or a childless nonterminal. The \defn{set of all derivation trees} is defined recursively as the smallest set $\Derivations$ satisfying:
\newcommand{\MyTree}[2][]{\tikz[CenterFix,level distance=.6cm,#1]{\Tree [.{\ensuremath{#2}} \edge[roof,fill=black!25]; {\hphantom{ab}} ]}}
\newcommand{\generictreetop}[0]{\raisebox{0pt}[1em][1.5em]{\tikz[CenterFix, scale=.7, sibling distance=.8cm, level distance=1cm, every node/.style={font=\footnotesize}]{
\node (xK) {\ntX}
  child { node[yshift=-.8em] { 
  \MyTree[scale=.9]{\ensuremath{\gramsym_1}} } 
  }
  child { node { \ensuremath{\cdots} } edge from parent[draw=none] }
  child { node[yshift=-.8em] { 
  \MyTree[scale=.9]{\ensuremath{\gramsym_K}} } 
  }
;
}}}
\newcommand{\genericsubtree}[2][]{\raisebox{0pt}[1em][1em]{\MyTree[font=\footnotesize,#1]{\ensuremath{\gramsym_{#2}}}}}
\begin{align}
\Derivations \defeq \ensuremath{\alphabet} \,\cup \label{def:derivations} \left\{\!
\generictreetop
\!\middle| \;
\wproduction{\ntX}{\gramsym_1\!\cdots\!\gramsym_K}{} \in \rules, \MyTree[scale=.9]{\gramsym_{1}}, {\ldots}, \MyTree[scale=.9]{\gramsym_{K}}  \in \Derivations
\right\}\nonumber
\end{align}
Note that the subtrees rooted at the internal nodes of $\tree$ are themselves derivation trees.
Let $\rootnode{\tree}$ denote the root label of \tree, and we write $\Derivations{\gramsym} \defeq \{ \tree \in \Derivations \colon \rootnode{\tree} = \gramsym \}$ to denote the derivations with root $\gramsym \in \alphabet \cup \nonterms$.
The \defn{yield} $\yield{\tree}$ of a tree $\tree$ is recursively defined as
\begin{subequations}
\begin{flalign}
&\bullet \textbf{if } \tree \in \alphabet
\text{: }\yield{\tree} \defeq \tree  &&\\
&\bullet\textbf{otherwise: }
\yield{\!
\generictreetop
\!}
\defeq
\yield{\genericsubtree{1}} \cdots
\yield{\genericsubtree{K}} &&
\end{flalign}
\end{subequations}
Note that the yield $\yield{\tree}$ of a childless node is $\emptystring$. A \defn{derivation forest} is a set of derivation trees; its weight is the sum of the weights of its trees. We write
$\Derivations{}{\str}\defeq \{ \tree \in \Derivations \colon \yield{\tree} = \str\}$ for the derivation forest with fixed yield $\str$, and $\Derivations{\gramsym}{\str} \defeq \Derivations{\gramsym} \cap \Derivations{}{\str}$ for the forest with root $\gramsym$ and yield $\str$.
The \defn{weight} $\weight{\tree}$ of the derivation tree \tree is defined as\looseness=-1
\begin{subequations}
\label{eq:tree-weight}
\begin{flalign}
&\bullet \textbf{if } \tree \in \alphabet
\text{: }\weight{\tree} \defeq 1  &&\\
&\bullet\textbf{otherwise: }
\weight{\!\generictreetop\!}
\defeq
\weight{\ntX \to \gramsym_1 \cdots \gramsym_K} \cdot
\weight{\genericsubtree{1}}\!\cdots\!\weight{\genericsubtree{K}} &&
\end{flalign}
\end{subequations}
\noindent For every $\gramsym \in \alphabet \cup \nonterms$, we define its \defn{weighted language} as
\begin{align}
\wl{\grammar}{\gramsym}{\str}
&\defeq
\sum_{\tree \in \Derivations{\gramsym}{\str}}
\!\!\!  \weight{\tree} 
\label{eq:weighted-language}
\end{align}
Every CFG defines a weighted language $\wl{\grammar}{}$ by taking the language of the root symbol:
\begin{align}
    \wl{\grammar}{}{\str} \defeq \wl{\grammar}{\start}{\str}
\end{align}
\noindent For each symbol $\gramsym \in \alphabet \cup \nonterms$, we define $\totalWeight{\gramsym}$ as the \defn{total weight}\label{def:total-weight} of all $\gramsym$-rooted derivations:\looseness=-1
\begin{align}
\totalWeight{\gramsym} = \sum_{\tree \in \Derivations{\gramsym}} \weight{\tree}
\end{align}
The total weights satisfy the following system of polynomial equations:
\begin{align}
\totalWeight{\gramsym} =
\begin{cases}
1 & \text{for }\gramsym \in \alphabet \\
\sum_{(\wproduction{\gramsym}{\gramsym_1 \cdots \gramsym_M}{\rw}) \in \rules} \rw \cdot \totalWeight{\gramsym_1} \cdots \totalWeight{\gramsym_M} & \text{for }\gramsym \in \nonterms
\end{cases}
\label{eq:total-weight-system}
\end{align}
The total weights are well-defined in any $\omega$-continuous semiring \citep{Handbook-Automata}; examples include the nonnegative extended reals. They can be approximated, e.g., by fixed-point iteration or generalized Newton's method \citep{Nederhof2008,esparza2007newton}.

\subsection{Grammar Transformations}
\label{sec:grammar-transformations}
A grammar transformation is a function $\phi$ that maps a CFG $\grammar$ to another CFG $\phi(\grammar)$. We say that $\phi$ is \emph{semantics preserving} if $\wl{\grammar}{}=\wl{\phi(\grammar)}{}$. Many transformations used to preprocess grammars are semantics preserving, including \defn{dead-rule elimination}, \defn{binarization}, \defn{nullary removal}, \defn{unary removal}, and \defn{terminal separation}. The purpose of most of these transformations is self-explanatory to a reader familiar with CFGs; we refer the reader to \citet{cotterell2022lecture7} for detailed descriptions. All transformations discussed in this section are included in our \href{https://github.com/genlm/prefix-acl-26}{code release}. Converting a grammar into CNF is itself a (composed) transformation $\EnsureCNF$: apply \EnsureCTF, then \EnsureNullaryFree, then \EnsureUnaryFree, then \EnsureTerminalSep.\footnote{Note that the order in which these transformations are applied matters. For example, applying \EnsureNullaryFree before \EnsureCTF can cause exponential blowup, as a rule of arity $K$ may spawn up to $2^K$ variants \citep{LangeLeiss2009}. Placing terminal separation last prevents subsequent multiplicative transformations from amplifying its additive cost.} In general, a grammar transformation $\phi$ can change both the size $|\phi(\grammar)|$ of the transformed grammar and the number of nonterminals $|\nonterms_\phi|$. We bound both for the transformations above:
\begin{itemize}
\item Dead rule elimination:
\begin{align} |\phi(\grammar)| &\le |\grammar|, & |\nonterms_\phi| &\le |\nonterms| \label{eq:bound-dead-rule} \end{align}
\item \EnsureCTF (binarization):
\begin{align} |\phi(\grammar)| &\le 3 |\grammar|, & |\nonterms_\phi| &\le |\nonterms| + |\grammar| - |\rules| \label{eq:bound-binarize} \end{align}
The $|\grammar| - |\rules|$ term accounts for intermediate nonterminals introduced when splitting rules of arity $> 2$.
\item \EnsureNullaryFree (nullary removal), provided $\grammar$ is in CTF:
\begin{align} |\phi(\grammar)| &\le \tfrac{7}{3} |\grammar| + 3, & |\nonterms_\phi| &\le |\nonterms| + 1 \label{eq:bound-nullary} \end{align}
\item \EnsureUnaryFree (unary removal):
\begin{align} |\phi(\grammar)| &\le |\nonterms| \cdot |\grammar|, & |\nonterms_\phi| &\le |\nonterms| \label{eq:bound-unary} \end{align}
\item \EnsureUnaryCycleFree (unary cycle removal) from \citet{opedal-etal-2023-efficient}:
\begin{align} |\phi(\grammar)| &\le |\grammar| + 2|\nonterms|^2, & |\nonterms_\phi| &\le 2|\nonterms| \label{eq:bound-unary-cycle} \end{align}
The $|\nonterms|^2$ term bounds the closure of the unary-rule graph; it is tight in the worst case (a single strongly connected component) and often far smaller in practice.
\item \EnsureTerminalSep (terminal separation):
\begin{align} |\phi(\grammar)| &\le |\grammar| + 2 |\alphabet|, & |\nonterms_\phi| &\le |\nonterms| + |\alphabet| \label{eq:bound-terminal-sep} \end{align}
\item \EnsureCNF (composing the above in the order \EnsureCTF, \EnsureNullaryFree, \EnsureUnaryFree, \EnsureTerminalSep). Tracing the pipeline:
\begin{enumerate}
\item $\EnsureCTF$: $|G_1| \le 3|\grammar|$,\; $|N_1| \le |\nonterms| + |\grammar| - |\rules|$ \hfill (\cref{eq:bound-binarize})
\item $\EnsureNullaryFree$: $|G_2| \le \tfrac{7}{3}|G_1| + 3 \le 7|\grammar| + 3$,\; $|N_2| \le |N_1| + 1$ \hfill (\cref{eq:bound-nullary})
\item $\EnsureUnaryFree$: $|G_3| \le |N_2| \cdot |G_2| \le (|\nonterms| + |\grammar| - |\rules| + 1)(7|\grammar| + 3)$,\; $|N_3| \le |N_2|$ \hfill (\cref{eq:bound-unary})
\item $\EnsureTerminalSep$: $|G_4| \le |G_3| + 2|\alphabet|$,\; $|N_4| \le |N_3| + |\alphabet|$ \hfill (\cref{eq:bound-terminal-sep})
\end{enumerate}
Summing yields:
\begin{align}
\begin{split}
|\phi(\grammar)| &\le (|\nonterms| + |\grammar| - |\rules| + 1)(7|\grammar| + 3) + 2|\alphabet| \\
|\nonterms_\phi| &\le |\nonterms| + |\grammar| - |\rules| + 1 + |\alphabet|
\end{split}
\label{eq:bound-cnf}
\end{align}
The dominant factor is unary removal; the $|\grammar|$ term in the first factor arises because binarization introduces up to $|\grammar| - |\rules|$ intermediate nonterminals. See \citet{LangeLeiss2009} for a detailed analysis of CNF conversion orderings and their worst-case blowup.
\end{itemize}

\subsection{Weighted Finite-State Automata}
\label{sec:wfsa-appendix}
A WFSA $\wfsa$ is a tuple $\wfsaTuple$, where $\alphabet$ is an alphabet, $\states$ is a finite set of states, $\arcs$ is a set of edges (each having form $\edge{\state}{\sym}{\rw}{\statep}$, with $\statep,\state \in \states$, $\rw \in \Weights$ and $\sym \in \alphabet$), $\initf\colon \states \to \Weights$ and $\finalf\colon \states \to \Weights$ are respectively the initial and final weight functions. A \defn{path} $\apath$ is a sequence of consecutive edges
\begin{align}
\edge{\state_0}{\sym_1}{\rw_1}{\state_1} \edge{}{\sym_2}{\rw_2}{\state_2} \cdots \edge{\state_{N-1}}{\sym_N}{\rw_N}{\state_N}
\end{align}
the path's weight is defined as $\weight{\apath}\!=\!\initf{(\state_0)}\cdot \rw_1\cdot \rw_2 \cdots \rw_N \cdot \finalf{(\state_N)}$, the path's yield is $\yield{\apath}\!=\!\sym_1 \cdots \sym_N$. We denote with $\Paths_{\wfsa}(\str)$ the set of paths with yield $\str$. A WFSA defines the weighted language:
\begin{align}
    \wl{\wfsa}{}{\str} =\!\!\!\! \sum_{\apath \in \Paths_{\wfsa}(\str)}\!\!\!\! \weight{\apath}
\end{align}

\clearpage
\section{Deriving the Prefix Grammar by CFG--FST Composition}
\label{sec:composition}

This section explains how we initially \emph{derived} the prefix grammar\footnote{Beyond CFGs, our observation that prefix parsing is just composition with a finite-state transducer (\cref{sec:composition}) suggests that any formalism that efficiently supports such composition is amenable to our general approach. For example, we can, in principle, use composition to derive prefix parsers for tree-adjoining grammars \citep[see, e.g.,][]{nederhof-1999-computational}. Historically, \citet{schabes-joshi-1988-earley} originally presented an $\bigo{N^9}$-time algorithm to perform prefix parsing on tree-adjoining grammars. This runtime was later improved to $\bigo{N^6}$ by \citet{nederhof-1999-computational}. Generalizing our reduction to tree-adjoining grammars would obviate the need for such bespoke algorithms.\looseness=-1} by composition with a weighted finite-state transducer.  Although \cref{def:prefix-grammar-optimized} is a more succinct encoding of the prefix grammar, we believe the derivation as a composition is of independent interest.  We begin by reviewing weighted relations and weighted finite-state transducers, then introduce the \defn{prefix transducer} $\prefixTransducer$.\looseness=-1

\label{sec:weighted-relations}Let $\alphabet$ and $\otheralphabet$ be alphabets. 
A \defn{weighted relation} between the strings $\strings$ and $\kleene{\otheralphabet}$ is a function ${\WRel \colon\!\strings\! \times\!\kleene{\otheralphabet}\!\to\!\Weights}$.  We define the \defn{composition} of $\WRel$ with the language $\Lang\colon
\strings \to \Weights$, as the language $\Lang \circ \WRel\colon \kleene{\otheralphabet} \to \Weights$, defined as follows:
\begin{align}
[\Lang \circ \WRel](\strt) \defeq \sum_{\str \in \strings}\Lang\Parens{\str} \, \WRel (\str, \strt)
\label{def:composition}
\end{align}
\begin{definition}
A \defn{weighted finite-state transducer} (\defn{WFST}) is a tuple $\wfst = \wfstTuple$ where
\begin{itemize}
\item $\alphabet$ is the \defn{input alphabet}
\item $\otheralphabet$ is the \defn{output alphabet}
\item $\states$ is a finite \defn{set of states}
\item $\arcs$ is a \defn{set of transitions} where each \defn{transition} is of the form $(\edge{\state}{\sym:\,\myothersym}{\rw}{\state'})$ with $\state,\state' \in \states$, $\sym \in \alphabet \cup \{\emptystring\}$, $\myothersym \in \otheralphabet \cup \{\emptystring\}$, and $\rw \in \Weights$
\item $\initf\colon \states \to \Weights$ is the \defn{initial weight} function
\item $\finalf\colon \states \to \Weights$ is the \defn{final weight} function. 
\end{itemize}
A \defn{path} $\apath$ is a sequence of transitions of the form:
\begin{align}
\edge{\state_0}{\sym_1{:}\myothersym_1}{\rw_1}{\state_1}
\edge{}{\sym_2{:}\myothersym_2}{\rw_2}
\cdots
\edge{\state_{N-1}}{\sym_N{:}\myothersym_N}{\rw_N}{\state_N},\nonumber
\end{align}
where $N \ge 0$ is its length, 
$\sym_1 \cdots \sym_N$ is its \defn{input yield}, 
$\myothersym_1 \cdots \myothersym_N$ is its \defn{output yield}, and 
$\weight{\apath} \defeq \initf(\state_0) \rw_1 \cdots \rw_N \finalf(\state_N)$ is its \defn{weight}. We denote by $\Pi(\str, \strt)$ the set of paths with input yield $\str$ and output yield $\strt$. Every transducer $\wfst$ defines a weighted relation as follows:
\begin{align}
\wl{\wfst}{}{\str,\strt} \defeq \smashoperator{\sum_{\apath \in \Pi(\str, \strt)}} \weight{\apath}
\end{align}
\end{definition}

The following WFST $\prefixTransducer$ implements the weighted relation $\wl{\prefixTransducer}{}{\str, \strt} = \indicator{\strt \preceq \str}$.

\begin{definition}
\label{def:prefix-transducer}
The \defn{prefix transducer} $\prefixTransducer$ for a given alphabet $\alphabet$ is illustrated below:\footnote{Note that any FST that encodes $\wl{\prefixTransducer}{}{\str, \strt} = \indicator{\strt \preceq \str}$ would work. Of course, a smaller machine will generally lead to a smaller increase in the size of the transformed grammar.  We believe this transducer is among the smallest possible, as it appears that at least two states are necessary.}
\begin{align}
\prefixTransducer \defeq
\begin{cases}
\begin{tikzpicture}[>=stealth,node distance=2.5cm,on grid,auto]
\tikzstyle{every state}=[fill={rgb:black,1;white,30}]
\node[state, initial below] (copy) {{$\copyState$}};
\node[state,initial below, accepting] (erase) [right=of copy] {$\eraseState$};
\path[->]
(copy) edge [loop above] node {${\color{StringColor}\sigma}{:}{\color{StringColor}\sigma}/1$} (copy)
       edge node {${\color{StringColor}\sigma}{:}{\color{StringColor}\sigma}/1$} (erase)
(erase) edge [loop above] node {${\color{StringColor}\sigma}{:}\emptystring/1$} (erase);
\end{tikzpicture}
\end{cases}
\end{align}
More precisely, the weighted transducer is defined by the tuple $\Tuple{\alphabet,\alphabet, \states, \arcs, \initf, \finalf}$, where
\begin{itemize}
\item $\alphabet$ is both the input and output alphabet
\item the set of states is $\states=\{ \copyState, \eraseState\}$; we call $\copyState$ and $\eraseState$ the \defn{copy} and \defn{erase} states, respectively
\item the transitions $\arcs$ are as shown in the picture
\item the initial weights $\initf$ are $\initf(\copyState) = 1$ and  $\initf(\eraseState) = 1$
\item the final weights are $\finalf(\copyState) = 0$ and $\finalf(\eraseState) = 1$
\end{itemize}
Note that in the picture, the \emph{start} arrow and the double circle mark states with nonzero initial and final weights, respectively. Additionally, a transition labeled ${\color{StringColor}\sigma}$ is shorthand for a set of transitions---one for each symbol in the alphabet.\looseness=-1
\end{definition}

\citet{pasti-etal-2023-intersection} provide a general construction for the 
composition of a CFG $\grammar$ and a WFST $\wfst$, denoted
$\grammar \circ \wfst$, which is itself a CFG, and is \emph{correct} in the 
sense that it denotes the composition between the corresponding weighted relation and language, i.e., $\wl{\grammar \circ \wfst}{} = \wl{\grammar}{} \circ \wl{\wfst}{}$.

Equivalently, we can construct a grammar for the prefix language via composition with $\prefixTransducer$: $\wl{\grammar \circ \prefixTransducer}{} = \prefixLanguage{\wl{\grammar}{}}$ by \citet{pasti-etal-2023-intersection}. The composition construction is spelled out below.

\begin{definition}
\label{def:prefix-grammar-composition}
The \defn{composition-based prefix grammar construction} works as follows.  Given a grammar $\grammar=\wcfgtuple$, the composition $\grammar \circ \prefixTransducer = \Tuple{\set{ \tri{s}{\gramsym}{s'} \mid \gramsym \in \nonterms \cup \alphabet, s,s' \in \set{\copyState, \eraseState}}, \alphabet, \preStart, \primeRules}$ where $\primeRules$ is defined by the rules below.
\begin{subequations}
\label{eq:prefix-grammar-composition}
\begin{align}
&\wproduction{\preStart}{\tri{\copyState}{\start}{\eraseState}}{1} \label{eq:prefixgrammar-start-1} && \\ 
&\wproduction{\preStart}{\tri{\eraseState}{\start}{\eraseState}}{1} \label{eq:prefixgrammar-start-2} && \\
&\wproduction{\tri{\copyState}{\ntX}{\copyState}}{\tri{\copyState}{\gramsym_1}{\copyState}\cdots \tri{\copyState}{\gramsym_K}{\copyState}}{\rw} \label{eq:prefixgrammar-qrules} && \annotation{\wproduction{\ntX}{\gramsym_1 \cdots \gramsym_K}{\rw} \in \rules} \\
&\wproduction{\tri{\copyState}{\sym}{\copyState}}{\sym}{1} \label{eq:prefixgrammar-qterminal} && \annotation{\sym \in \alphabet} \\
& \wproduction{\tri{\copyState}{\ntX}{\eraseState}}{
\tri{\copyState}{\gramsym_1}{\copyState}
\cdots
\tri{\copyState}{\gramsym_{k-1}}{\copyState}
\,
\tri{\copyState}{\gramsym_k}{\eraseState}
\,
\tri{\eraseState}{\gramsym_{k+1}}{\eraseState}
\cdots
\tri{\eraseState}{\gramsym_K}{\eraseState}
}{\rw} \label{eq:prefixgrammar-qmixed}\hspace{-1in} && \\
&   &&\annotation{\wproduction{\ntX}{\gramsym_1 \cdots \gramsym_K}{\rw} \in \rules,\, k \in 1, {\ldots}, K}
\nonumber \\
&\wproduction{\tri{\copyState}{\sym}{\eraseState}}{\sym}{1} \label{eq:prefixgrammar-qstarterminal} && \annotation{\sym \in \alphabet} \\
&\wproduction{\tri{\eraseState}{\ntX}{\eraseState}}{\tri{\eraseState}{\gramsym_1}{\eraseState}\cdots \tri{\eraseState}{\gramsym_K}{\eraseState}}{\rw} \label{eq:prefixgrammar-qstar} && \annotation{\wproduction{\ntX}{\gramsym_1 \cdots \gramsym_K}{\rw} \in \rules} \\
&\wproduction{\tri{\eraseState}{\sym}{\eraseState}}{\emptystring}{1} \label{eq:prefixgrammar-nullary} && \annotation{\sym \in \alphabet} 
\end{align}
\end{subequations}
\end{definition}
\noindent The construction is based directly on \citeposs{pasti-etal-2023-intersection} composition construction for $\grammar \circ \prefixTransducer$ where $\prefixTransducer$ is given in \cref{def:prefix-transducer}.\footnote{\label{fn:minor-optimization}Readers familiar with the composition algorithm may recognize that \cref{def:prefix-grammar-composition} is a specialized application of the composition algorithm for the prefix transducer.  We have hard-coded the prefix transducer's states and transitions.  We have omitted nonterminals of the form $\Tuple{\eraseState, \_, \copyState}$, as they are always nongenerating and, thus, can be removed without altering the weighted language of the prefix grammar.
Additionally, we have exploited knowledge of the path structure in $\prefixTransducer$; specifically, each path with nonzero weight has the form:
$\edge{\copyState}{{\color{StringColor}\sigma}_1{:}{\color{StringColor}\sigma}_1}{1}{\copyState}
\cdots
\edge{\copyState}{{\color{StringColor}\sigma}_{m-1}{:}{\color{StringColor}\sigma}_{m-1}}{1}{\copyState}
\edge{}{{\color{StringColor}\sigma}_m{:}{\color{StringColor}\sigma}_m}{1}{\eraseState}
\edge{}{{\color{StringColor}\sigma}_{m+1}{:}\emptystring}{1}{\eraseState}
\cdots
\edge{\eraseState}{{\color{StringColor}\sigma}_M{:}\emptystring}{1}{\eraseState}$.
We exploit this knowledge to reduce the number of useless rules; we can see this structure in the right-hand side of \cref{eq:prefixgrammar-qmixed}.\looseness=-1
} The proposition below establishes that the composition-based construction also correctly encodes the prefix language.\looseness=-1
\begin{restatable}{proposition}{restatablePrefixCompositionCorrectness}
\label{thm:prefix-correctness}
Let $\grammar$ be a CFG. Then, $\grammar \circ \prefixTransducer$ correctly encodes the prefix language of $\grammar$, i.e., $\wl{\grammar \circ \prefixTransducer}{} = \prefixLanguage{\wl{\grammar}{}}$.
\end{restatable}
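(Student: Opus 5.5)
Direct proof: characterize the weighted language of each composite nonterminal $\tri{s}{\gramsym}{s'}$ of $\grammar \circ \prefixTransducer$ in terms of $\grammar$, then read off the start symbol. There are three nonterminal families. For every $\gramsym \in \nonterms \cup \alphabet$ and $\str \in \strings$, I will show
\begin{align*}
\wl{}{\tri{\copyState}{\gramsym}{\copyState}}(\str) &= \wl{\grammar}{\gramsym}(\str), \\
\wl{}{\tri{\eraseState}{\gramsym}{\eraseState}}(\str) &= \indicator{\str = \emptystring}\,\totalWeight{\gramsym}, \\
\wl{}{\tri{\copyState}{\gramsym}{\eraseState}}(\str) &= \indicator{\str \neq \emptystring}\sum_{\strt \in \strings} \indicator{\str \preceq \strt}\,\wl{\grammar}{\gramsym}(\strt).
\end{align*}

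\textbf{Copy and erase families.} The first two claims each follow from a weight-preserving bijection between derivations.
\begin{itemize}
\item For the copy family, rules \cref{eq:prefixgrammar-qrules,eq:prefixgrammar-qterminal} form an isomorphic copy of $\grammar$. Relabelling each node $\gramsym$ as $\tri{\copyState}{\gramsym}{\copyState}$ maps $\Derivations{\gramsym}{\str}$ onto the corresponding derivations with the same weight.
\item For the erase family, rules \cref{eq:prefixgrammar-qstar,eq:prefixgrammar-nullary} again mirror $\grammar$, but every terminal is erased. Hence every $\gramsym$-rooted tree of $\grammar$ corresponds to exactly one $\tri{\eraseState}{\gramsym}{\eraseState}$ tree, with the same weight and yield $\emptystring$. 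Summing over all trees gives $\totalWeight{\gramsym}$.
\end{itemize}
Because these are bijections on derivation trees, they hold in any $\omega$-continuous semiring with no convergence argument.

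\textbf{Mixed family.} The third claim is the main step. I will prove it by establishing a bijection between pairs $(\tree,\str)$ and derivations of the mixed nonterminal. Here $\tree \in \Derivations{\gramsym}$ ranges over trees of $\grammar$, and $\str$ ranges over nonempty prefixes of $\yield{\tree}$.
\begin{itemize}
\item Fix such a pair with $\str \neq \emptystring$ and $\str \preceq \yield{\tree}$. The last symbol of $\str$ is a specific terminal leaf of $\tree$. The path from the root to this leaf selects, at each rule application $\ntX \to \gramsym_1\cdots\gramsym_K$, the unique child $k$ whose subtree contains that leaf.
\item Relabel the tree by depth along this spine. Children left of $k$ get $\tri{\copyState}{\cdot}{\copyState}$. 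Child $k$ gets $\tri{\copyState}{\cdot}{\eraseState}$. Children right of $k$ get $\tri{\eraseState}{\cdot}{\eraseState}$. The leaf itself uses \cref{eq:prefixgrammar-qstarterminal}. This yields a tree built from rules \cref{eq:prefixgrammar-qmixed}, has yield exactly $\str$, and keeps the weight $\weight{\tree}$, since all added rules have weight $1$.
\item Conversely, every derivation from $\tri{\copyState}{\gramsym}{\eraseState}$ has a unique spine of mixed nonterminals ending in a terminal. Forgetting the state annotations recovers $(\tree,\str)$.
\end{itemize}
The delicate point is uniqueness: nullary subtrees and empty-yield siblings must not create alternative split points. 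Anchoring on the leaf that produces the \emph{last} symbol of $\str$ resolves this, because everything to its right is forced to be erased and everything to its left is forced to be copied.

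\textbf{Conclusion.} Summing over $\tree$ gives the third formula. Then use the two start rules \cref{eq:prefixgrammar-start-1,eq:prefixgrammar-start-2}. For $\str \ne \emptystring$, only the mixed branch contributes, giving $\prefixLanguage{\wl{\grammar}{}}(\str)$. For $\str = \emptystring$, only the erase branch contributes, giving $\totalWeight{\start} = \sum_{\strt}\wl{\grammar}{}(\strt) = \prefixLanguage{\wl{\grammar}{}}(\emptystring)$. This is exactly the statement.

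As an alternative, one could cite the general correctness of the \citet{pasti-etal-2023-intersection} composition together with $\wl{\prefixTransducer}{}{\str,\strt} = \indicator{\strt\preceq\str}$. The explicit bijection above, however, directly justifies the pruning described in \cref{fn:minor-optimization}.
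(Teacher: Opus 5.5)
Your proof is correct, and it takes a genuinely different route from the paper.

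The paper's proof is a three-step calculation. It invokes Corollary~1 of \citet{pasti-etal-2023-intersection} (general correctness of CFG--WFST composition) to write $\wl{\grammar \circ \prefixTransducer}{}{\str} = \sum_{\strz \in \strings} \wl{\grammar}{}{\strz} \cdot \wl{\prefixTransducer}{}{\strz, \str}$. It then substitutes $\wl{\prefixTransducer}{}{\strz, \str} = \indicator{\str \preceq \strz}$ and reads off the definition of the prefix language. That route is short and modular, and it is what licenses the paper's remark that any formalism supporting composition with a transducer inherits a prefix construction. However, it outsources the real work to the external theorem. It also tacitly relies on two facts the paper only asserts:
\begin{itemize}
\item that $\prefixTransducer$ computes $\indicator{\strt \preceq \str}$;
\item that the hard-coded grammar of \cref{def:prefix-grammar-composition} has the same weighted language as the full composition, once the nongenerating $\tri{\eraseState}{\cdot}{\copyState}$ nonterminals and non-monotone state sequences are discarded (per \cref{fn:minor-optimization}).
\end{itemize}

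Your bijection proves the statement for the grammar exactly as written, never touches the transducer, and needs no induction or limit argument. It also gives per-family semantics: the copy family is the original language, the erase family is the total weight at $\emptystring$, and the mixed family is the nonempty-prefix weight. This is precisely the strengthened invariant in the paper's proof of \cref{thm:optimized-prefix-grammar-is-correct}. If you identify the copy family with $\nonterms$ and replace the erase family by the constants $\totalWeight{\gramsym}$, your spine of mixed nonterminals becomes the chain of prime nonterminals. So your argument doubles as a derivation of \cref{def:prefix-grammar-optimized}. Because it works directly on derivation trees rather than height-truncated weights, it also avoids the truncated-totals subtlety that the paper has to patch in a footnote to that proof.
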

\begin{proof}
Let $\grammar=\wcfgtuple$.
\begin{subequations}
\begin{align}
\wl{\grammar \circ \prefixTransducer}{}{\str}
&= \sum_{\strz \in \strings} \wl{\grammar}{}{\strz} \cdot \wl{\prefixTransducer}{}{\strz, \str}
&\text{\Comment{By Corollary 1 of \citet{pasti-etal-2023-intersection}}}
\\
&= \sum_{\strz \in \strings} \wl{\grammar}{}{\strz} \cdot \indicator{\str \preceq \strz}
&\text{\Comment{By the definition of the prefix transducer (\cref{def:prefix-transducer})}}
\\
&= \prefixLanguage{\wl{\grammar}{}}(\str)
&\text{\Comment{By the definition of prefix language (\cref{sec:background})}}
\end{align}
\end{subequations}
\end{proof}
\looseness=-1

\clearpage

\section{Instantiations of Prefix Parsing with the Prefix Grammar}
\label{sec:app-prefix-parsers}
In \cref{sec:prefix-parsing}, we presented prefix parsing as a general transformation that allows us to turn any existing parsing algorithm into a prefix parsing one; we shall now consider two specific instantiations of this approach, one with CKY and the other with Earley as the underlying parser.

\subsection{Prefix Parsing with \IncrCKY}
\label{sec:prefix-cky}

We first instantiate the prefix-grammar approach with CKY. Our incremental formulation, \IncrCKY (\cref{alg:incremental-cky}), runs in $\bigo{|\EnsureCNF(\grammar)|N^3}$, where $\EnsureCNF(\grammar)$ is the CNF conversion of $\grammar$. \PrefixCKY (\cref{alg:cky-prefix-parser}) applies the prefix grammar transformation and then calls \IncrCKY. Note that the runtime of \IncrCKY follows the form of \cref{thm:prefix-parsing-runtime}. We next analyze the blowup introduced by the prefix grammar transformation and all the preprocessing steps required in \PrefixCKY.

\begin{proposition}
\label{prop:prefix-cnf-bound}
Let $\grammar$ be a CFG.  Then $|\EnsureCNF(\PrefixOperator(\EnsureCTF(\grammar)))| = \bigo{|\grammar|^2+ |\nonterms||\grammar|}$. 
\end{proposition}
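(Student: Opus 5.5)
The plan is to trace sizes and nonterminal counts through the composed pipeline. Throughout, write $G_1 = \EnsureCTF(\grammar)$ and $G_2 = \PrefixOperator(G_1)$, and let $\nonterms_1$ and $\nonterms_2$ be their respective nonterminal sets.

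First, by \cref{eq:bound-binarize}, $|G_1| \le 3|\grammar|$ and $|\nonterms_1| \le |\nonterms| + |\grammar| - |\rules|$. In particular $|\nonterms_1| = \bigo{|\nonterms| + |\grammar|}$, and in fact $\bigo{|\grammar|}$, since every nonterminal that appears in some rule contributes to the size.

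Second, $G_1$ is in canonical two-form, so \cref{prop:optimized-grammar-bound} gives $|G_2| \le \frac{8}{3}|G_1| + 3 \le 8|\grammar| + 3$. Directly from \cref{def:prefix-grammar-optimized}, $|\nonterms_2| = 2|\nonterms_1| + 1$. It is also important that $G_2$ is still in CTF. The rules in \cref{eq:prefix-grammar-optimized-borderline-exit} keep the arity of the original rule they come from, which is at most two, while \cref{eq:prefix-grammar-optimized-start} and \cref{eq:prefix-grammar-optimized-exit} have arity one and zero. Hence the $\EnsureCTF$ step inside $\EnsureCNF$ changes nothing, or at most multiplies the size by a constant by \cref{eq:bound-binarize}, and adds no intermediate nonterminals.

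Third, I push $G_2$ through the remaining stages, following the same tracing as in \cref{eq:bound-cnf} but starting from $G_2$. Nullary removal on a CTF grammar, by \cref{eq:bound-nullary}, gives size $\bigo{|G_2|} = \bigo{|\grammar|}$ and $|\nonterms_2| + 1$ nonterminals. Unary removal, by \cref{eq:bound-unary}, multiplies the size by the nonterminal count, giving $\bigo{|\nonterms_2| \cdot |\grammar|} = \bigo{(|\nonterms| + |\grammar|)\,|\grammar|}$. Terminal separation, by \cref{eq:bound-terminal-sep}, adds $2|\alphabet|$. We may assume every terminal occurs in some rule, or else restrict $\alphabet$ to the terminals that do; then $|\alphabet| \le |\grammar|$ and this term is absorbed. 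Combining these gives $\bigo{|\grammar|^2 + |\nonterms||\grammar|}$.

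The main obstacle is unary removal, which is the only multiplicative step. Its factor is the number of nonterminals after binarization and priming, not $|\nonterms|$. The argument therefore depends on controlling $|\nonterms_2|$, and specifically on the fact that the intermediate nonterminals introduced by binarization number at most $|\grammar| - |\rules|$. That is exactly where the $|\grammar|^2$ term comes from. I would check carefully that the primed copies only double this count, and that no later step reintroduces a blow-up in the number of nonterminals.
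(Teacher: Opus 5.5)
Your proposal is correct and follows essentially the same route as the paper. Both trace size and nonterminal count through binarization, then the prefix transformation (which doubles the nonterminals and keeps arity at most two, so the binarization step inside CNF conversion is a no-op), then nullary removal, unary removal (the only multiplicative step, with factor equal to the post-binarization, post-priming nonterminal count), and finally terminal separation. Your explicit handling of the $2|\alphabet|$ term, by restricting to terminals that occur in rules so that $|\alphabet| \le |\grammar|$, is slightly more careful than the paper, which simply drops that term as lower order.
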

\begin{proof}
We trace the grammar size and the nonterminal count through the preprocessing pipeline, referencing the bounds established in \cref{sec:wcfg-appendix}.  We write $G_i$ for the grammar after step $i$, and $N_i$ for its nonterminal set.
Given the input grammar $\grammar=\wcfgtuple$, we have:
\begin{enumerate}
\item $\EnsureCTF$: $|G_1'| \le 3|\grammar|$,\; $|N_1'| \le |\nonterms| + |\grammar| - |\rules|$ \hfill (\cref{eq:bound-binarize})
\item $\PrefixOperator$: $|G_2'| \le \tfrac{8}{3}|G_1'| + 3 \le 8|\grammar| + 3$,\; $|N_2'| = 2|N_1'| + 1$ \hfill (\cref{prop:optimized-grammar-bound}, \cref{def:prefix-grammar-optimized})

The nonterminal count doubles because the prefix grammar introduces a prime copy $\primeNT{\ntX}$ of each nonterminal $\ntX \in N_1'$, plus one prefix start symbol $\preStart$. Moreover, because $\PrefixOperator$ preserves arity $\le 2$, $G_2'$ is already binary; invoking \cref{eq:bound-cnf} on $G_2'$ would pay for an $\EnsureCTF$ step that is in fact a no-op, incurring a spurious $3\times$ factor via \cref{eq:bound-binarize}. We therefore trace the remaining \EnsureCNF steps directly.

\item $\EnsureNullaryFree$: $|G_3'| \le \tfrac{7}{3}|G_2'| + 3 \le \tfrac{7}{3}(8|\grammar| + 3) + 3 \le 19|\grammar| + 10$,\; $|N_3'| \le |N_2'| + 1$ \hfill (\cref{eq:bound-nullary})
\item $\EnsureUnaryFree$: $|G_4'| \le |N_3'| \cdot |G_3'| \le (2|N_1'| + 2)(19|\grammar| + 10)$,\; $|N_4'| \le |N_3'|$ \hfill (\cref{eq:bound-unary})
\item $\EnsureTerminalSep$: $|G_5'| \le |G_4'| + 2|\alphabet|$,\; $|N_5'| \le |N_4'| + |\alphabet|$ \hfill (\cref{eq:bound-terminal-sep})
\end{enumerate}

\noindent So $|\EnsureCNF(\PrefixOperator(\EnsureCTF(\grammar)))| = |G_5'| \le (2(|\nonterms| + |\grammar| - |\rules|) + 2)(19|\grammar| + 10) + 2|\alphabet|$. Expanding and dropping lower-order terms: $|G_5'| = \bigo{|\grammar|^2 + |\nonterms||\grammar|}$, matching the proposition.

\end{proof}

We note that this is a worst-case bound; in practice, the blowup is much more contained. In particular, in our experiments with PCFGs, when comparing the size of the preprocessed prefix grammar $\EnsureCNF(\prefixGrammar)$ to that of the preprocessed original grammar $\EnsureCNF(\grammar)$, the size ratio is only a small multiplicative factor  (\cref{tab:grammar-size-cky}). Since \IncrCKY's runtime is linear in the preprocessed grammar size, this small factor translates directly to the runtime overhead for \PrefixCKY.

\begin{algorithm}
\begin{algorithmic}[1] 
\footnotesize
\Function{$\PrefixCKY(\grammar,\chars_1 {\cdots} \chars_N)$}{}
\LinesComment{Apply the prefix-grammar transformation (should be cached for efficiency), and parse as usual.}
\State \Return $\IncrCKY(\PrefixOperator(\EnsureCTF(\grammar)), \chars_1 {\cdots} \chars_N)$
\EndFunction
\end{algorithmic}
\caption{ \footnotesize Prefix parsing with the prefix grammar transformation and \IncrCKY (\cref{alg:incremental-cky}).}
\label{alg:cky-prefix-parser}
\end{algorithm}

\begin{algorithm}[h]
\begin{algorithmic}[1]
\footnotesize
\Function{$\IncrCKY(\grammar,\chars_1 {\cdots} \chars_N)$}{}
\State $\grammar \gets \EnsureCNF(\grammar)$ \Comment{CNF conversion; if needed (should be cached for efficiency)}
\State $\inside{}{N} \gets \texttt{defaultdict}(0)$ \Comment{Initialize new column}
\If{$N=0$} \Comment{Base case: empty string.}
\For{$(\wproduction{\start}{\emptystring}{\rw}) \in \rules$}
\State $\inside{}{N}{0, \start} \opluseq \rw$
\EndFor 
\State \Return $\inside{}{N}(0,\start), \Tuple{\inside{}{N}}$ \Comment{Below: recurse on prefix}
\EndIf
\State $\_, \Tuple{\inside{}{0}, {\ldots}, \inside{}{N-1}} \gets \IncrCKY(\grammar,\chars_1 {\cdots} \chars_{N-1})$ 
\For{$(\wproduction{\ntX}{\chars_N}{\rw}) \in \rules$} \Comment{Preterminal rules $\chars_N$}
    \State $\inside{}{N}{N\!-\!1, \ntX } \opluseq \rw$
\EndFor
\For{$ i \textbf{ in } N\!-\!1\ldots 0$}  \Comment{Iterate over the start point}
    \For{$j \textbf{ in } i \!+\! 1 \ldots N\!-\!1$} \Comment{Iterate over the split point}
        \For{$(\wproduction{\ntX}{\ntY\, \ntZ}{\rw}) \in \rules$} \Comment{Binary rules}
            \State $\inside{}{N}{i, \ntX} \opluseq  \rw \cdot \inside{}{j}{i, \ntY} \cdot \inside{}{N}{j, \ntZ}$
        \EndFor
    \EndFor
\EndFor 
\State \Return $\inside{}{N}(0,\start),\Tuple{\inside{}{0}, {\ldots}, \inside{}{N-1}, \inside{}{N}}$
\EndFunction
\end{algorithmic}
\caption{\footnotesize An incremental, left-to-right formulation of \IncrCKY that supports memoization across prefixes. \IncrCKY returns the full column of inside weights: the entry $\inside{}{j}{i, \ntX}$ gives the total weight of all derivations of $\ntX$ with yield $\chars_{i+1} {\cdots} \chars_j$. The goal weight $\wl{\grammar}{}{\chars_1 {\cdots} \chars_N}$ is $\inside{}{N}{0,\start}$.}
\label{alg:incremental-cky}
\end{algorithm}

\begin{table}[h]
\centering
\footnotesize
\caption{\footnotesize Grammar size of different PCFGs used in our experiments. $\grammar_1$ is the grammar after \IncrCKY's preprocessing; $\grammar_2$ is the grammar after \PrefixCKY's preprocessing. The ratio column reports $|\grammar_2|/|\grammar_1|$. The grammars reported in the table are WSJ 500 (obtained from the first 500 sentences of the Wall Street Journal Corpus) and ``\emph{Social Discourse}'', both obtained from \citep{luong2013-parsing-discourse}. We were not able to include the larger WSJ 5000 because our machine had insufficient memory for the unary removal operation of \EnsureCNF.}
\label{tab:grammar-size-cky}
\renewcommand{\arraystretch}{1.4}
\begin{tabular}{|l@{\hspace{1em}}|r@{\hspace{1em}}|r@{\hspace{1em}}|r@{\hspace{1em}}|c|}
\hline
\textbf{Grammar} & $|\grammar|;|\nonterms|$ & $|\grammar_1|;|\nonterms_1|$ & $|\grammar_2|;|\nonterms_2|$ & \textbf{Ratio} $|\grammar_2|/|\grammar_1|$ \\
\hline
WSJ 500 & 12{,}573; 69 & 73{,}241; 1{,}766  & 235{,}459; 1{,}814 & $3.22\times$ \\
Social Discourse [Sparse]      & 72{,}712; 233 & 211{,}015; 233 & 357{,}066; 306 & $1.69\times$ \\
\hline
\end{tabular}
\renewcommand{\arraystretch}{1.0}
\end{table}

\clearpage
\subsection{Prefix Parsing with Earley's Algorithm}
\label{sec:prefix-earley}
We now instantiate the prefix-grammar approach using the weighted Earley's algorithm \citep{Earley}.  Analogously to \PrefixCKY, \PrefixEarley  (\cref{alg:earley-prefix-parser}) applies the prefix grammar transformation and then parses with Earley's.
Importantly, the experimental results (\cref{fig:runtime-comparison-three-algorithms}) show that prefix parsing is slower than ordinary parsing by a multiplicative factor of $\approx 3$; this matches the prediction from the relative grammar-size blow-up (\cref{tab:grammar-size-earley}).

\begin{algorithm}[h]
\begin{algorithmic}[1]
\footnotesize
\Function{$\PrefixEarley(\grammar,\chars_1 {\cdots} \chars_N)$}{}
\LinesComment{Apply the prefix-grammar transformation (should be cached for efficiency), and parse as usual.}
\State \Return $\Earley(\PrefixOperator(\EnsureCTF(\grammar)), \chars_1 {\cdots} \chars_N)$
\EndFunction
\end{algorithmic}
\caption{\footnotesize Prefix parsing with the prefix grammar transformation and \Earley (\cref{alg:earley}).}
\label{alg:earley-prefix-parser}
\end{algorithm}

\paragraph{An optimized version of Earley's algorithm.}
We present an optimized, incremental version of Earley's algorithm (\cref{alg:earley}),
which we use as a base for our prefix parsing and next-token weight vector algorithms.
Assuming $\grammar$ is nullary-free and unary-cycle-free, our algorithm runs in $\bigo{|\grammar|N^3}$---a factor of $|\rules|$ faster than the traditional dotted-item Earley formulation.\footnote{Our items record only the unmatched suffix $\valpha$, rather than the traditional $\ntX \to \boldsymbol{\gamma}\bullet\valpha$ dotted-item form. This collapses many rules into one item, a simpler alternative to \citeposs{opedal-etal-2023-efficient} approach, which introduces additional item types.}
To achieve this improved runtime, we maintain a list of \emph{items} of the form $\langle i, \ntX/\valpha \rangle$, where $i$ is the starting position and $\ntX/\valpha$ is read as ``$\ntX$ missing $\valpha$ to complete''. The algorithm proceeds through three operations: 
\textsc{Predict} generates new items from grammar rules.\footnote{Our implementation uses the left-corner relation to prune the \textsc{Predict} step, skipping rules whose left-hand sides are not needed by any active item \citep{stolcke-1995-efficient}.}
\textsc{Scan} advances items over terminal symbols, and \textsc{Attach}---traditionally also known as \emph{complete}---attaches completed items to items awaiting them. The data structure $\EarleyW{k}$ indexes items at position $k$ by the symbol they are waiting for, enabling efficient lookup during the \textsc{Attach} phase.

\paragraph{Grammar blowup: bounds and empirical evidence.} Following \cref{thm:prefix-parsing-runtime}, we bound $|\phi(\prefixGrammar)|$, where $\phi$ is the preprocessing pipeline required for \PrefixEarley.

\begin{proposition}
Let $\grammar$ be a CFG. Then,\\ $|\EnsureUnaryCycleFree(\EnsureNullaryFree(\PrefixOperator(\EnsureCTF(\grammar))))| = \bigo{|\grammar|^2+|\nonterms|^2}$.\footnote{The $|\grammar|^2$ term arises when $\grammar$ has high-arity rules: $\EnsureCTF$ introduces up to $|\grammar| - |\rules|$ new nonterminals, whose primes may participate in unary cycles. For grammars already in canonical two-form, the bound tightens to $\bigo{|\grammar| + |\nonterms|^2}$. This worst case is loose in practice (see \cref{tab:grammar-size-earley}).}
\end{proposition}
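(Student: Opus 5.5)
We follow the template of \cref{prop:prefix-cnf-bound}: we trace the grammar size and the nonterminal count through the pipeline $\EnsureCTF$, $\PrefixOperator$, $\EnsureNullaryFree$, $\EnsureUnaryCycleFree$, invoking the per-step bounds of \cref{sec:grammar-transformations}. Write $G_i$ and $N_i$ for the grammar and nonterminal set after step $i$. The first two steps are copied from the proof of \cref{prop:prefix-cnf-bound}. By \cref{eq:bound-binarize}, $|G_1| \le 3|\grammar|$ and $|N_1| \le |\nonterms| + |\grammar| - |\rules|$. By \cref{prop:optimized-grammar-bound} and \cref{def:prefix-grammar-optimized}, $|G_2| \le 8|\grammar| + 3$ and $|N_2| = 2|N_1| + 1$.

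We use that $G_2$ is still in canonical two-form, since $\PrefixOperator$ never increases arity. This is exactly the hypothesis needed to apply \cref{eq:bound-nullary}, so
\[
|G_3| \le \tfrac{7}{3}(8|\grammar|+3)+3 \le 19|\grammar| + 10,
\qquad
|N_3| \le |N_2| + 1 .
\]
Finally, \cref{eq:bound-unary-cycle} gives $|G_4| \le |G_3| + 2|N_3|^2$ and $|N_4| \le 2|N_3|$.

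Substituting, we get
\[
|G_4| \le 19|\grammar| + 10 + 2\bigl(2(|\nonterms| + |\grammar| - |\rules|) + 3\bigr)^2 .
\]
Expanding the square gives terms of order $|\nonterms|^2$, $|\grammar|^2$ (via $(|\grammar|-|\rules|)^2 \le |\grammar|^2$), and the cross term $|\nonterms|\,|\grammar|$. The cross term is absorbed by AM--GM, $|\nonterms||\grammar| \le \tfrac12(|\nonterms|^2 + |\grammar|^2)$. The linear terms are dominated by the quadratic ones. This yields $\bigo{|\grammar|^2 + |\nonterms|^2}$.

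For the footnote's CTF refinement, the binarization step is skipped, so $|N_1| = |\nonterms|$. The same computation then gives $\bigo{|\grammar| + |\nonterms|^2}$.

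The main point of care is checking that each cited bound's hypotheses hold at the point of application. First, \cref{eq:bound-nullary} requires CTF input, and this is why we must argue that $\PrefixOperator$ preserves arity $\le 2$: the rules in \cref{eq:prefix-grammar-optimized-borderline-exit} have the same arity as their source rules, and the new start rules have arity $\le 1$. Second, \cref{eq:bound-unary-cycle} should be applied to the nullary-free grammar, and its $|\nonterms|^2$ term must be instantiated with $|N_3|$, not with the original $|\nonterms|$. This substitution is precisely what creates the $|\grammar|^2$ term in the final bound.
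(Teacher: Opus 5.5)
Your proposal is correct and follows essentially the same route as the paper. You trace binarization, the prefix transformation, nullary removal and unary-cycle removal with the per-step size and nonterminal bounds, and you instantiate the quadratic term of unary-cycle removal with $|N_3|$ before expanding; your AM--GM step only makes explicit the treatment of the cross term that the paper leaves under ``dropping lower-order terms.'' One harmless slip: since $|N_3| \le |N_2| + 1 = 2|N_1| + 2$, the constant inside your square should be $+2$ rather than $+3$, though $+3$ still gives a valid, slightly looser upper bound.
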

\begin{proof}
We analyze $\phi(\grammar)$ first, then extend to $\phi(\prefixGrammar)$.
We trace the grammar size and the nonterminal count through the preprocessing pipeline, referencing the bounds established in \cref{sec:wcfg-appendix}.  We write $G_i$ for the grammar after step $i$, and $N_i$ for its nonterminal set.
We apply $\EnsureCTF$ (required before $\EnsureNullaryFree$ for efficiency), $\EnsureNullaryFree$, and $\EnsureUnaryCycleFree$ in sequence. Given the input grammar $\grammar=\wcfgtuple$, we have
\begin{enumerate}
\item $\EnsureCTF$: $|G_1| \le 3|\grammar|$,\; $|N_1| \le |\nonterms| + |\grammar| - |\rules|$ \hfill (\cref{eq:bound-binarize})
\item $\EnsureNullaryFree$: $|G_2| \le  7|\grammar| + 3$,\; $|N_2| \le |\nonterms| + |\grammar| - |\rules| + 1$ \hfill (\cref{eq:bound-nullary})
\item $\EnsureUnaryCycleFree$: $|G_3| \le 7|\grammar| + 3 + 2 (|\nonterms| + |\grammar| - |\rules|)^2 $  
\end{enumerate}
Thus, $|\phi(\grammar)| \le 7|\grammar| + 3 + 2 (|\nonterms| + |\grammar| - |\rules|)^2$. For the prefix grammar, the preprocessing pipeline becomes:
\begin{enumerate}
\item $\EnsureCTF$: $|G_1| \le 3|\grammar|$,\; $|N_1| \le |\nonterms| + |\grammar| - |\rules|$ \hfill (\cref{eq:bound-binarize})
\item $\PrefixOperator$: $|G_2| \le \tfrac{8}{3}|G_1| + 3 \le 8|\grammar| + 3$,\; $|N_2| = 2|N_1| + 1$ \hfill (\cref{prop:optimized-grammar-bound}, \cref{def:prefix-grammar-optimized})

Since $\PrefixOperator$ preserves arity $\le 2$, $G_2$ is already binary and binarization is a no-op.
\item $\EnsureNullaryFree$: $|G_3| \le \tfrac{7}{3}|G_2| + 3 \le \tfrac{7}{3}(8|\grammar| + 3) + 3 \le 19|\grammar| + 10$,\; $|N_3| \le |N_2| + 1 = 2(|\nonterms| + |\grammar| - |\rules|) + 2$ \hfill (\cref{eq:bound-nullary})
\item $\EnsureUnaryCycleFree$: $|G_4| \le |G_3| + 2|N_3|^2 \le 19|\grammar| + 10 + 2(2(|\nonterms| + |\grammar| - |\rules|) + 2)^2$ \hfill (\cref{eq:bound-unary-cycle})
\end{enumerate}
Thus, $|\phi(\prefixGrammar)| \le 19|\grammar| + 10 + 2(2(|\nonterms|+|\grammar|-|\rules|)+2)^2$. Expanding and dropping lower-order terms: $|\phi(\prefixGrammar)| = \bigo{|\grammar|^2 + |\nonterms|^2}$, matching the proposition.
\end{proof}

In practice, this bound is far from tight. On the grammars we experimented with, the preprocessed prefix grammar is only a small multiplicative factor larger than the ordinary grammar preprocessed for \Earley (\cref{tab:grammar-size-earley}). Given our optimized \Earley implementation's runtime bound, this small factor translates directly to the ratio of prefix parsing to parsing runtime, as confirmed empirically by our experiments (\cref{fig:runtime-comparison-three-algorithms}).\looseness=-1

\begin{algorithm}[h]
\caption{\footnotesize Earley's Algorithm takes the grammar $\grammar = \wcfgtuple$ and a string $\str = \chars_1 {\cdots} \chars_k \in \strings$. It returns the \emph{inside weight columns} $\Tuple{\inside{}{0}, {\ldots}, \inside{}{k}}$ together with the waiting-for dictionaries $\Tuple{\EarleyW{0}, {\ldots}, \EarleyW{k}}$; the total string weight is $\wl{\grammar}{}{\chars_1 {\cdots} \chars_k} = \inside{}{k}{0, \start}$. The notation $\ntX/\valpha$ is read as $\ntX$ missing $\valpha$ to complete.
The data structure $\EarleyW{k}[\gramsym]$ maintains a dictionary of items $\Tuple{i, \ntX / \gramsym_1 \valpha_{[1:]}}$ that are awaiting the symbol $\gramsym$ in order to move towards completion. The queue $\EarleyQ$ is a priority queue that prioritizes items with the smallest span $k-i$, where $k$ is the current column, and $i$ is the item's starting position.
}
\label{alg:earley}
\begin{algorithmic}[1]
\footnotesize
\Func{$\Earley(\grammar,\chars_1 {\cdots} \chars_k)$}
\Comment{The \Earley function should be memoized for efficiency.}
\State $\grammar \gets \EnsureUnaryCycleFree(\EnsureNullaryFree(\grammar))$ \Comment{Ensure no nullary rules, or unary rule cycles; memoize}
\State $\inside{}{k} \gets \texttt{defaultdict}(0)$ \Comment{Initialize inside weight column}
\State $\EarleyW{k} \gets \texttt{defaultdict}(\texttt{set})$
\Comment{Initialize the waiting-for dictionary}
\State $\EarleyQ \gets \texttt{priority\_queue}()$ \Comment{This queue prioritizes items with the shortest span.}

\If{$k = 0$} \Comment{Base cases}
\For{$(\wproduction{\ntX}{\valpha}{\rw}) \in \rules$}
\State $\chartupdate{0}{0, \ntX/\valpha}{\rw}$
\EndFor
\State \Return $\Tuple{\inside{}{0}}, \Tuple{\EarleyW{0}}$
\EndIf

\State $\Tuple{\inside{}{0}, {\ldots}, \inside{}{k-1}}, \Tuple{\EarleyW{0}, {\ldots}, \EarleyW{k-1}} \gets \Earley(\grammar,\chars_1 {\cdots} \chars_{k-1})$ 
\Comment{Recurse on prefix}

\For{$\Tuple{i, \ntX/\chars_k \valpha} \in \EarleyW{k-1}[\chars_k]$}   \Comment{\textsc{Scan}} \label{alg:line-earley-scan}
\State $\chartupdate{k}{i, \ntX / \valpha}{\inside{}{k-1}{i, \ntX / \chars_k \valpha}}$
\EndFor
\While{$\EarleyQ$}\Comment{\textsc{Attach}} \label{alg:line-earley-attach}
\State $\Tuple{j, \ntY/\emptystring} \gets \EarleyQ.\texttt{pop()}$ \Comment{Pop completed item $\ntY$ together with the index $j$ where item began}
\For{$\Tuple{i, \ntX/\ntY \valpha} \in \EarleyW{j}[\ntY]$} \Comment{Iterate through items ending at $j$ that are waiting for a $\ntY$}
\State $\chartupdate{k}{i, \ntX / \valpha}{\inside{}{j}{i, \ntX / \ntY \valpha}  \cdot \inside{}{k}{j, \ntY / \emptystring}}$ \Comment{Attach the weights and update}
\EndFor
\EndWhile
\For{$(\wproduction{\ntX}{\valpha}{\rw}) \in \rules$} \Comment{\textsc{Predict}}
\label{alg:line-earley-predict}
\State $\chartupdate{k}{k, \ntX/\valpha}{\rw}$
\EndFor
\State \Return $\Tuple{\inside{}{0}, {\ldots}, \inside{}{k-1}, \inside{}{k}}, \Tuple{\EarleyW{0}, {\ldots}, \EarleyW{k-1}, \EarleyW{k}}$
\EndFunc
\Statex \vspace{-.5em}
\LinesComment{Helper methods}
\Func{\fbox{\mbox{\ensuremath{\chartupdate{k}{i, \ntX / \valpha}{\val}}}}} \Comment{This method updates the weights, $\EarleyQ$ and $\EarleyW{k}$}
\label{alg:line-update}
    \If{$\valpha = \emptystring$} $\EarleyQ.\texttt{push}(\Tuple{i,\ntX/\emptystring})$  \Comment{Newly completed items get scheduled in the priority queue} \label{alg:line-update-completed}
    \Else\ $\EarleyW{k}[\gramsym_{1}].\texttt{add}(\Tuple{i, \ntX/\valpha})$  \Comment{Item $\langle i, \ntX/\valpha \rangle$ is waiting for $\gramsym_1$ to move forward} \label{alg:line-update-waiting-for}
    \EndIf
    \State $\inside{}{k}{i, \ntX / \valpha} \gets \inside{}{k}{i, \ntX / \valpha} + \val$
\EndFunc
\end{algorithmic}
\end{algorithm}

\begin{table}[h]    
\centering                  
\footnotesize                                                                
\caption{\footnotesize Grammar size of different PCFGs $\grammar$ used in our
experiments. $\grammar_1$ and $\grammar_2$ are respectively the grammars     
obtained after the preprocessing step of $\Earley$ and $\PrefixEarley$. The ratio column reports $|\grammar_2|/|\grammar_1|$. The grammars reported in the table are WSJ 500 (obtained from the first 500 sentences of the Wall Street Journal Corpus), WSJ 5000 (obtained from the first 5000 sentences of the Wall Street Journal Corpus), and ``\emph{Social Discourse}''. We empirically found that applying $\EnsureCNF$ after the prefix grammar transformation and right before $\EnsureNullaryFree$ yielded a smaller grammar; the results shown in this table follow this approach.}       
\label{tab:grammar-size-earley}                                              
\renewcommand{\arraystretch}{1.4}
\begin{tabular}{|l@{\hspace{1em}}|r@{\hspace{1em}}|r@{\hspace{1em}}|r@{\hspace{1em}}|c|}                                                                  
\hline
\textbf{Grammar} & $|\grammar|;|\nonterms|$ & $|\grammar_1|;|\nonterms_1|$ & 
$|\grammar_2|;|\nonterms_2|$ & \textbf{Ratio} $|\grammar_2|/|\grammar_1|$ \\ 
\hline
WSJ 500  & 12{,}573; 69   & 15{,}981; 1{,}775    & 43{,}701; 5{,}201    &    
$2.73\times$ \\                                                                                      WSJ 5000 & 116{,}667; 448 & 177{,}303; 30{,}878  & 494{,}017; 84{,}566  &
$2.79\times$ \\
Social Discourse [Sparse]   & 72{,}712; 233  & 72{,}712; 233        & 143{,}548; 435       &
$1.97\times$ \\ 
\hline                                                          
\end{tabular}                                                                
\renewcommand{\arraystretch}{1.0}                               
\end{table}

\clearpage
\section{Practical Instantiations of the Next-Token Weight Vector Algorithm}
\label{sec:app-next-token}

In \cref{sec:next-token-algorithms}, we presented a general framework for computing the next-token weight vector $\nextTokenWeights{}{\str}$ via lattice parsing and algorithmic differentiation. This section instantiates this framework with two concrete parsing algorithms: CKY (\cref{sec:app-next-token-cky}) and Earley's (\cref{sec:app-next-token-earley}). In both cases, we derive the gradient algorithm by manually applying the rules of reverse-mode algorithmic differentiation to the corresponding lattice parser.\footnote{In general, the gradient program can also be obtained automatically, e.g., via program tracing, operator overloading, or source-to-source transformation \citep[see][for an overview]{griewank}, using tools such as PyTorch \citep{pytorch}, TensorFlow \citep{tensorflow}, JAX \citep{jax}, Tapenade \citep{tapenade}, or Dyna \citep{eisner-goldlust-smith-2005}.}\looseness=-1

\subsection{Next-Token Weight Vector with CKY}
\label{sec:app-next-token-cky}

\begin{algorithm*}[h]
\footnotesize
\begin{minipage}[t]{0.5\textwidth}
\begin{algorithmic}[1]
\Function{$\IncrCKYLattice(\grammar,\chars_1 {\cdots} \chars_N, \parameters)$}{}
\State $\grammar \gets \EnsureCNF(\grammar)$ \Comment{CNF conversion; memoize}
\State $\Tuple{\inside{}{0}, {\ldots}, \inside{}{N}} \gets \IncrCKY(\grammar,\chars_1 {\cdots} \chars_{N})$
\State $\z \gets \texttt{defaultdict}(0)$ \Comment{Initialize new column}
\LinesComment{Base case:}
\For{$(\wproduction{\ntX}{\sym}{\rw}) \in \rules$ with $\sym \in \alphabet$} \label{alg:line-terminal}
    \State $\z(N, \ntX) \opluseq \rw \cdot \param_{\sym}$
\EndFor
\LinesComment{Recursive step:}
\For{$i \textbf{ in } N, {\ldots}, 0$} \label{alg:line-forward-inside} \Comment{Iterate over the start point.}
    \For{$j \textbf{ in } i \!+\! 1 \ldots N$} \Comment{Split point}
        \For{$(\wproduction{\ntX}{\ntY\, \ntZ}{\rw}) \in \rules$} \Comment{Binary rules}
            \State $\z(i, \ntX) \opluseq  \rw \cdot \inside{}{j}{i, \ntY} \cdot \z(j, \ntZ)$
        \EndFor
    \EndFor
\EndFor
\State \Return $\z(0, \start)$ \Comment{equals $\partition{\chars}(\parameters)$}
\EndFunction
\end{algorithmic}
\end{minipage}\hfill
\begin{minipage}[t]{0.5\textwidth}
\begin{algorithmic}[1]
\Function{$\NextTokenCKY(\grammar,\chars_1 {\cdots} \chars_N)$}{}
\LinesComment{Create prefix grammar in CNF; memoize}
\State $\prefixGrammar \gets \EnsureCNF(\PrefixOperator(\EnsureCTF(\grammar)))$ 
\State $\Tuple{\inside{}{0}, {\ldots}, \inside{}{N}} \gets \IncrCKY(\prefixGrammar,\chars_1 {\cdots} \chars_N)$
\State $\dz \gets \texttt{defaultdict}(0)$
\State $\dz(0,\start) \opluseq  1$ \Comment{Base case}
\label{alg:line-base-case}
\For{$i \textbf{ in } 0 \ldots N\!-\!1$} \Comment{Inverse iteration}\label{alg:line-backward}
\For{$j$ \textbf{ in } $i \!+\! 1 \ldots N$} \Comment{Loop over the split point $j$}
\For{$(\wproduction{\ntX}{\ntY\, \ntZ}{\rw}) \in \rules$}
\State $\dz(j,\ntZ) \opluseq \rw \cdot \inside{}{j}{i,\ntY } \cdot \dz(i,\ntX)$
\EndFor
\EndFor
\EndFor
\State $\nextTokenWeights{}{} \gets 0^{\alphabet}$
\For{$(\wproduction{\ntX}{\sym}{\rw}) \in \rules$ where $\sym \in \alphabet$} \label{alg:line-terminal-backward} \Comment{Preterminal rules}
\State $\nextTokenWeights{\sym}{} \opluseq \rw \cdot \dz(N,\ntX)$
\EndFor
\State \Return $\nextTokenWeights{}{}$
\EndFunction
\end{algorithmic}
\end{minipage}
\caption{\footnotesize \textbf{Left}: the \IncrCKYLattice is a specialized lattice parser that parses the next-token lattice $\nextTokenLattice{\str}$ for some choice of $\parameters \in \Weights^{\alphabet}$. Note that \IncrCKYLattice is directly derived from \IncrCKY, differing only in the terminal step (\cref{alg:line-terminal}), which scans all terminal symbols simultaneously, weighting each by $\param_{\sym}$. \textbf{Right}: the \NextTokenCKY algorithm, which was derived by manually applying algorithmic differentiation to \IncrCKYLattice. Given the input grammar $\grammar$, the input string $\chars_1 {\cdots} \chars_N$, and the inside weight columns $\Tuple{\inside{}{0}, {\ldots}, \inside{}{N}}$, it returns the next-token weight vector $\nextTokenWeights{}{\str}$. Note that both algorithms call \IncrCKY as a subroutine to compute the inside weight columns (which, however, can be amortized across subsequent calls when \NextTokenCKY is called incrementally).\looseness=-1}
\label{alg:next-token-cky}
\end{algorithm*}

\noindent In \IncrCKYLattice (\cref{alg:next-token-cky}, left), the \emph{forward values} $\z(i,\ntX)$ for $i = 0, {\ldots}, N$ and $\ntX \in \nonterms$ aggregate partial contributions to $\partition{\str}(\parameters) = \z(0,\start)$. In \NextTokenCKY (right), the corresponding \emph{backward values} $\dz(i,\ntX)\defeq \frac{\partial \z(0,\start)}{\partial \z(i,\ntX)}$ disaggregate this sum and isolate each token's contribution.
Readers familiar with the inside--outside algorithm \citep{insideOutside} will notice that the backward pass is closely related to the \emph{outside algorithm}, following \citeposs{eisner-2016-inside} presentation of the outside algorithm as the \emph{adjoint} of the inside algorithm.
The runtime of \NextTokenCKY, following the analysis of \PrefixCKY, is $\bigo{|\EnsureCNF(\PrefixOperator(\EnsureCTF(\grammar)))| N^3}$, including the cost of computing the \emph{inside weight columns} $\inside{}{0}, {\ldots}, \inside{}{N}$, which can be amortized across incremental computations.\looseness=-1

\subsection{Next-Token Weight Vector with Earley's Algorithm}
\label{sec:app-next-token-earley}

We now adapt the lattice parsing and algorithmic differentiation approach to Earley's algorithm \citep{Earley} and its weighted version \citep{stolcke-1995-efficient}. Earley's algorithm requires less preprocessing than CKY---only \EnsureNullaryFree and \EnsureUnaryCycleFree---and runs in subcubic time for most grammars: linear for deterministic grammars, and quadratic for unambiguous ones.

\paragraph{Next-token weight vector with Earley (\cref{alg:next-token-earley}).}
In \IncrEarleyLattice, the \emph{forward values} $\z(j, \ntY/\emptystring)$ for $j \in \{0, {\ldots}, N\}$ and $\ntY \in \nonterms$ aggregate partial contributions to the goal item $\z(0, \start) = \partition{\chars_1 {\cdots} \chars_N}(\parameters)$. In \NextTokenEarley, the corresponding \emph{backward values} $\dz(j, \ntY/\emptystring) \defeq \frac{\partial \z(0, \start)}{\partial \z(j, \ntY/\emptystring)}$ disaggregate this sum to isolate each token's contribution. The runtime of \NextTokenEarley, following the analysis of \PrefixEarley, is $\bigo{|\EnsureUnaryCycleFree(\EnsureNullaryFree(\PrefixOperator(\EnsureCTF(\grammar))))| N^3}$, including the cost of computing the \emph{inside weight columns} $\inside{}{0}, {\ldots}, \inside{}{N}$, which can be amortized across incremental computations.

\begin{algorithm*}[h]
\begin{minipage}[t]{0.5\textwidth}
\begin{algorithmic}[1]
\footnotesize
\Function{$\IncrEarleyLattice(\grammar,\chars_1 {\cdots} \chars_N, \parameters)$}{}
\State $\grammar \gets \EnsureNullaryFree(\grammar)$ 
\State $\grammar \gets \EnsureUnaryCycleFree(\grammar)$
\State $\wcfgtuple \gets \grammar$
\State $\Tuple{\inside{}{0}, {\ldots}, \inside{}{N}}\!,\!\Tuple{\EarleyW{0}, {\ldots}, \EarleyW{N}} \gets \Earley(\grammar,\chars_1 {\cdots} \chars_{N})$
\State $\z \gets \texttt{defaultdict}(0)$ \Comment{Initialize new column}
\State $\EarleyQ \gets \texttt{priority\_queue}()$

\For{$\sym \in \EarleyW{N}.\texttt{keys} \cap \alphabet$} \Comment{\textsc{Scan}}
  \For{$\Tuple{i, \ntX/\sym} \in \EarleyW{N}[\sym]$} \label{alg:line-scan-optimized}
      \State $\z(i, \ntX / \emptystring) \opluseq \inside{}{N}{i, \ntX/\sym} \cdot \param_{\sym}$
      \State $\EarleyQ.\texttt{push}(\Tuple{i,\ntX/\emptystring})$
  \EndFor
\EndFor
\While{$\EarleyQ$}   \Comment{\textsc{Attach}}
\State $\Tuple{j, \ntY/\emptystring} \gets \EarleyQ.\texttt{pop()}$
\For{$\Tuple{i, \ntX/\ntY } \in \EarleyW{j}[\ntY]$} \label{alg:line-attached-optimized}
  \State $\z(i, \ntX / \emptystring) \opluseq \inside{}{j}{i, \ntX / \ntY }  \cdot \z(j, \ntY / \emptystring)$
  \State $\EarleyQ.\texttt{push}(\Tuple{i,\ntX/\emptystring})$
\EndFor
\EndWhile
\State \Return $\z(0, \start/\emptystring)$ \Comment{equals $\partition{\chars_1 {\cdots} \chars_N}(\parameters)$}
\EndFunction
\end{algorithmic}
\end{minipage}\hfill
\begin{minipage}[t]{0.5\textwidth}
\begin{algorithmic}[1]
\footnotesize
\Func{$\NextTokenEarley(\grammar,\chars_1 {\cdots} \chars_N)$}
\State $\prefixGrammar \gets \PrefixOperator(\EnsureCTF(\grammar))$ \Comment{Prefix transformation}
\State $\Tuple{\inside{}{0}, {\ldots}, \inside{}{N}}\!,\!\Tuple{\EarleyW{0}, {\ldots}, \EarleyW{N}}\!\gets\!\Earley(\prefixGrammar,\chars_1 {\cdots} \chars_{N})$
    \State $\nextTokenWeights{}{} \gets 0^{\alphabet}$
    \For{$\sym \in \EarleyW{N}.\texttt{keys} \cap \alphabet$}
            \For{$\Tuple{i, \ntX /\sym} \in \EarleyW{N}[\sym]$} 
                    \State $\nextTokenWeights{\sym}{} \opluseq \inside{}{N}{i, \ntX / \sym} \cdot
                    \dz(i,\ntX / \emptystring)$ 
            \EndFor
    \EndFor
    \State \Return $\nextTokenWeights{}{}$
\EndFunc
\Func{\ensuremath{\dz(j,\ntY / \emptystring)}} \Comment{Memoize for efficiency} \label{alg:line-backward-helper}
    \If{$(j,\ntY) = (0, \start)$} 
        \Return $1$ \Comment{Base Case}
    \EndIf
    \State $\val \gets 0$
    \For{$\Tuple{i, \ntX / \ntY} \in \EarleyW{j}[\ntY]$} \Comment{Recursive bottom-up call.}
            \State $\val \opluseq \inside{}{j}{i, \ntX / \ntY} \cdot
                      \dz(i,\ntX / \emptystring)$ \Comment{Note: $i \leq j$.}
    \EndFor
    \State \Return $\val$
\EndFunc    
\end{algorithmic}
\end{minipage}
\caption{\footnotesize\textbf{Left}: \IncrEarleyLattice is a lattice parser derived from Earley's algorithm (\cref{alg:earley}) that evaluates the next-token lattice $\nextTokenLattice{\chars}$ for some choice of $\parameters \in \Weights^{\alphabet}$. It differs from \Earley only in the \textsc{Scan} step, which scans all terminal symbols simultaneously, weighting each by $\param_{\sym}$. Since only a single terminal is scanned at position $N\!+\!1$, the following simplifications apply: 
(1)~\textsc{Scan} processes only items from $\EarleyW{N}$ that complete after scanning one symbol (\cref{alg:line-scan-optimized}), and \textsc{Attach} only advances items with a single remaining symbol (\cref{alg:line-attached-optimized}); (2)~since no further columns are needed, \textsc{Predict} can be omitted entirely.
\textbf{Right}: \NextTokenEarley is the gradient algorithm of \IncrEarleyLattice, obtained by applying algorithmic differentiation. Given the prefix grammar $\prefixGrammar$, the input string $\str = \chars_1 {\cdots} \chars_N$, and the inside weight columns from \Earley, it returns the next-token weight vector $\nextTokenWeights{}{\str}$. The backward values $\dz(j, \ntY/\emptystring)$ are computed by the memoized helper function (\cref{alg:line-backward-helper}).
}
\label{alg:next-token-earley}
\end{algorithm*}

\clearpage

\section{Deferred Proofs}
\label{sec:proofs-of-correctness}

\subsection{Proof of \texorpdfstring{\cref{thm:optimized-prefix-grammar-is-correct}}{the prefix-grammar correctness theorem}}

\restatablePrefixGrammarCorrectness*
\begin{proof}
\label{thm:optimized-prefix-grammar-is-correct-proof}
We strengthen the proposition to a claim over all $\gramsym \in \nonterms \cup \alphabet$, prove the strengthening by induction on derivation-tree height in $\prefixGrammar$, and recover the proposition by specializing to $\gramsym = \start$.
\paragraph{Strengthening.}
The \cref{thm:optimized-prefix-grammar-is-correct} is implied by the following statement, for all $\gramsym \in \nonterms \cup \alphabet$ and all $\str \in \kleeneplus{\alphabet}$:
\begin{align}
\wl{\prefixGrammar}{\primeNT{\gramsym}}{\str} 
= \sum_{\strt \in \strings} \wl{\grammar}{\gramsym}{\str \strt} 
\quad\text{and}\quad
\wl{\prefixGrammar}{\gramsym}{\str} 
= \wl{\grammar}{\gramsym}{\str}
\label{eq:stronger-statement}
\end{align}
The latter holds because $\prefixGrammar$'s additional rules have only prime nonterminals or $\preStart$ on the LHS (\cref{def:prefix-grammar-optimized}), so non-prime $\gramsym$-rooted derivations use only the rules of $\rules$. Thus, we need only prove the former, which we re-express self-recursively in $\prefixGrammar$:
\begin{align}
\wl{\prefixGrammar}{\primeNT{\gramsym}}{\str} 
= \sum_{\strt \in \strings} \wl{\prefixGrammar}{\gramsym}{\str \strt} 
\end{align}

\paragraph{$\hookrightarrow$ Hooking up \cref{eq:stronger-statement} to \cref{thm:optimized-prefix-grammar-is-correct}.}
Specializing the first equation of~\cref{eq:stronger-statement} to $\gramsym = \start$ and expanding $\preStart$'s rules (\cref{def:prefix-grammar-optimized}) yields the proposition. Recall that $\prefixGrammar$'s start symbol $\preStart$ has two rules: $\wproduction{\preStart}{\primeNT{\start}}{1}$ and $\wproduction{\preStart}{\emptystring}{\totalWeight{\start}}$.\looseness=-1
\begin{itemize}
\item \textbf{Case ($\str \in \kleeneplus{\alphabet}$)}: Only the first rule contributes (the second requires $\str = \emptystring$). By \cref{eq:stronger-statement} at $\gramsym = \start$:
\[
\wl{\prefixGrammar}{}{\str} = \wl{\prefixGrammar}{\primeNT{\start}}{\str} = \sum_{\strt \in \strings} \wl{\grammar}{\start}{\str \strt} = \prefixLanguage{\wl{\grammar}{}}(\str)
\]

\item \textbf{Case ($\str = \emptystring$)}: Every $\primeNT{\start}$-derivation has nonempty yield (the prime chain from the root must terminate at a terminal via \cref{eq:prefix-grammar-optimized-borderline-exit}), so only the $\preStart \to \emptystring$ rule contributes. By the definition of $\totalWeight{\start}$:
\[
\wl{\prefixGrammar}{}{\emptystring} = \totalWeight{\start} = \sum_{\strt \in \strings} \wl{\grammar}{\start}{\strt} = \prefixLanguage{\wl{\grammar}{}}(\emptystring)
\]
\end{itemize}

\paragraph{Notation.}
We introduce the following notation to keep the proof tidy.
\begin{itemize}
\item Let $\preD$ denote the set of derivation trees of $\prefixGrammar$.
\item Let $\preD^{(h)}$ denote the subset of trees in $\preD$ of height $\le h$; $\preD^{(h)}_{\gramsym}(\str)$ restricts to $\gramsym$-rooted trees with yield $\str$.

\item Let $\grammar^{(h)}$ denote $\grammar$ with its derivations restricted to height $\le h$.
\item Let $\prefixGrammar^{(h)} \defeq \prefixLanguage{(\grammar^{(h)})}$ be the prefix grammar of $\grammar^{(h)}$; its prime rule weights use the height-bounded totals $\preWeightTotal^{(h)}_{\gramsym}$ rather than $\totalWeight{\gramsym}$.
\item Let $\preWeight^{(h)}_{\gramsym}(\str) \defeq \sum_{\tree \in \preD^{(h)}_{\gramsym}(\str) } \weight{\tree}$
\item Let $\preWeightTotal^{(h)}_{\gramsym} \defeq \sum_{\strt \in \strings} \preWeight^{(h)}_{\gramsym}(\strt)$ 
\end{itemize}

\paragraph{Proof strategy.}
We prove \cref{eq:stronger-statement} by induction on the height $h \ge 1$ of derivation trees in $\prefixGrammar$. We define the height-indexed proposition:
\begin{align}
\Phi(h) \Longleftrightarrow \forall \gramsym \in \nonterms \cup \alphabet, \str \in \kleeneplus{\alphabet} \colon
\preWeight^{(h)}_{\primeNT{\gramsym}}(\str)
= \sum_{\strt \in \strings} \preWeight^{(h)}_{\gramsym}(\str \strt)
\tag{IH}\label{IH}
\end{align}

\paragraph{Base case ($h = 1$).}
We show $\Phi(1)$ holds. There are two ways for a tree to have height one.
\begin{itemize}
\item \textbf{Case ($\gramsym \in \alphabet$)}: Since $\primeNT{\gramsym} = \gramsym$, both sides of $\Phi(1)$ reduce to $\indicator{\str = \gramsym}$, using $\str \in \kleeneplus{\alphabet}$ to collapse the right-hand side sum.
\item \textbf{Case ($\wproduction{\ntX}{\emptystring}{\rw}$)}: $\primeNT{\ntX}$ has no height-$1$ derivation since the rules of \cref{eq:prefix-grammar-optimized-borderline-exit} have nonempty right-hand sides, so the LHS of $\Phi(1)$ is $0$. The RHS is also $0$ because $\preWeight^{(1)}_\ntX(\str\strt)$ is nonzero only when $\str\strt = \emptystring$, which contradicts $\str \in \kleeneplus{\alphabet}$.
\end{itemize}

\paragraph{Inductive hypothesis.}
Suppose $\Phi(h)$ holds for some $h \ge 1$.

\paragraph{Inductive case.}
We show $\Phi(h{+}1)$ holds. In this case, we know that $\primeNT{\gramsym}$ is of the form $\primeNT{\ntX}$ for $\ntX \in \nonterms$.

{\footnotesize
\begin{subequations}
\begin{align}
&
\sum_{\strt \in \strings}
\preWeight^{(h+1)}_{\ntX}(\str \strt) \nonumber\\
&= 
\sum_{\strt \in \strings}
\sum_{\wproduction{\ntX}{\gramsym_1 \cdots \gramsym_K}{\rw}}
\sum_{\str \strt = \strv{1} \cdots \strv{K}}
\rw
\cdot
\preWeight^{(h)}_{\gramsym_1}(\strv{1})
\cdots
\preWeight^{(h)}_{\gramsym_k}(\strv{k})
\preWeight^{(h)}_{\gramsym_{k+1}}(\strv{k+1})
\cdots
\preWeight^{(h)}_{\gramsym_K}(\strv{K}) \\
&= 
\sum_{
\wproduction{\ntX}{\gramsym_1 \cdots \gramsym_K}{\rw}
}
\sum_{k=1}^K
\sum_{\substack{
\str = \strv{1} \cdots \strv{k} \\
\strt = \strv{k+1} \cdots \strv{K}
}}
\rw
\cdot
\preWeight^{(h)}_{\gramsym_1}(\strv{1})
\cdots
\preWeight^{(h)}_{\gramsym_k}(\strv{k})
\preWeight^{(h)}_{\gramsym_{k+1}}(\strv{k+1})
\cdots
\preWeight^{(h)}_{\gramsym_K}(\strv{K}) \\
&= 
\sum_{
\wproduction{\ntX}{\gramsym_1 \cdots \gramsym_K}{\rw}
}
\sum_{k=1}^K
\sum_{\str = \strv{1} \cdots \strv{k}}
\rw
\cdot
\preWeight^{(h)}_{\gramsym_1}(\strv{1})
\cdots
\preWeight^{(h)}_{\gramsym_k}(\strv{k})
\ \ \smashoperator{\sum_{\strt = \strv{k+1} \cdots \strv{K}}} \ \ 
\preWeight^{(h)}_{\gramsym_{k+1}}(\strv{k+1})
\cdots
\preWeight^{(h)}_{\gramsym_K}(\strv{K})
 \\
&= 
\sum_{
\wproduction{\ntX}{\gramsym_1 \cdots \gramsym_K}{\rw}
}
\sum_{k=1}^K
\sum_{\str = \strv{1} \cdots \strv{k}}
\rw
\cdot
\preWeight^{(h)}_{\gramsym_1}(\strv{1})
\cdots
\preWeight^{(h)}_{\gramsym_k}(\strv{k})
\left(
\preWeightTotal^{(h)}_{\gramsym_{k+1}}
\cdots
\preWeightTotal^{(h)}_{\gramsym_K}
\right) \\
&= 
\sum_{
\wproduction{\ntX}{\gramsym_1 \cdots \primeNT{\gramsym_k}}{\rw}
}
\sum_{\str = \strv{1} \cdots \strv{k}}
\rw
\cdot
\preWeight^{(h)}_{\gramsym_1}(\strv{1})
\cdots
\preWeight^{(h)}_{\primeNT{\gramsym_k}}(\strv{k}) \\
&=
\preWeight^{(h+1)}_{\primeNT{\ntX}}(\str)
\end{align}
\end{subequations}
}

\noindent The manipulation above proceeds in six steps:
\begin{enumerate}
\item \textbf{Tree-weight recursion for $\ntX$.} Each height-$\le h{+}1$ tree rooted at $\ntX$ applies some rule $\wproduction{\ntX}{\gramsym_1 \cdots \gramsym_K}{\rw}$ at the root with subtrees of height $\le h$.
\item \textbf{Border position.} Reparametrize the joint sum $\sum_\strt \sum_{\str\strt = \strv{1}\cdots\strv{K}}$ by introducing $k \in \{1, \ldots, K\}$ with $\str = \strv{1} \cdots \strv{k}$ and $\strt = \strv{k+1} \cdots \strv{K}$.
\item \textbf{Factor the $\strt$-side.} The factors $\preWeight^{(h)}_{\gramsym_1}(\strv{1}) \cdots \preWeight^{(h)}_{\gramsym_k}(\strv{k})$ are fixed by the $\str$-split, so they move out of the inner sum over $\strt = \strv{k+1} \cdots \strv{K}$.
\item \textbf{Collapse to totals.} We simplify the inner summation:
\begin{align*}
&\smashoperator{\sum_{\strt = \strv{k+1} \cdots \strv{K}}} \ \
\preWeight^{(h)}_{\gramsym_{k+1}}(\strv{k+1})
\cdots
\preWeight^{(h)}_{\gramsym_K}(\strv{K}) \\
&= \sum_{\strv{k+1}} \cdots \sum_{\strv{K}} \preWeight^{(h)}_{\gramsym_{k+1}}(\strv{k+1}) \cdots \preWeight^{(h)}_{\gramsym_K}(\strv{K}) \\
&= {\left[ \sum_{\strv{k+1}} \preWeight^{(h)}_{\gramsym_{k+1}}(\strv{k+1})\right]}\cdots
{\left[\sum_{\strv{K}} \preWeight^{(h)}_{\gramsym_K}(\strv{K})\right]} \\
&= \preWeightTotal^{(h)}_{\gramsym_{k+1}}
\cdots
\preWeightTotal^{(h)}_{\gramsym_K}
\end{align*}
\item \textbf{The prime rule.} The absorbed weight $\rw' = \rw \cdot \preWeightTotal^{(h)}_{\gramsym_{k+1}} \cdots \preWeightTotal^{(h)}_{\gramsym_K}$ matches the prime rule $\primeNT{\ntX} \to \gramsym_1 \cdots \gramsym_{k-1} \primeNT{\gramsym_k}$ (\cref{eq:prefix-grammar-optimized-borderline-exit}), so the joint sum over (original rule, $k$) becomes a pattern-match over such prime rules. Applying $\Phi(h)$ to $\gramsym_k$ rewrites $\preWeight^{(h)}_{\gramsym_k}(\strv{k})$ as $\preWeight^{(h)}_{\primeNT{\gramsym_k}}(\strv{k})$.
\item \textbf{Tree-weight recursion for $\primeNT{\ntX}$.} The result is the tree-weight recursion for $\preWeight^{(h+1)}_{\primeNT{\ntX}}(\str)$.
\end{enumerate}
This establishes $\Phi(h{+}1)$, completing the induction. Since every derivation tree in $\prefixGrammar$ has finite height, $\Phi(h)$ for all $h \ge 1$ recovers \cref{eq:stronger-statement}, and the proposition follows via the hook-up step.\footnote{Our proof implicitly uses $\preWeightTotal^{(h)}$ in the prime rule weights, i.e., the totals of $\prefixGrammar^{(h)}$ rather than $\prefixGrammar$. As $h \to \infty$, $\preWeightTotal^{(h)}_{\gramsym} \to \totalWeight{\gramsym}$, recovering $\prefixGrammar$'s rule weights; $\Phi(h)$ for all $h \!\ge\! 1$ then gives \cref{eq:stronger-statement}. A more formal induction use pairs $(\indicator{\gramsym \in \nonterms}, h)$.\looseness=-1}

\end{proof}

\clearpage
\subsection{Proof of \texorpdfstring{\cref{prop:optimized-grammar-bound}}{the grammar-size bound}}

\restatableGrammarSizeBound*
\begin{proof}
\label{prop:optimized-grammar-bound-proof}
According to \cref{def:prefix-grammar-optimized}, the prefix grammar $\prefixGrammar$ consists of the original rules $\rules$ plus the additional prefix rules $\primeRules$. We bound each contribution:
\begin{itemize}
\item The original rules $\rules$ contribute $|\grammar|$.
\item \Cref{eq:prefix-grammar-optimized-start,eq:prefix-grammar-optimized-exit} contribute $2 + 1 = 3$.
\item For \cref{eq:prefix-grammar-optimized-borderline-exit}:
\begin{itemize}
\item Each nullary rule (size $1$) produces no prefix rules.
\item Each unary rule (size $2$) produces one prefix rule ($k\!=\!1$) of size $2$, contributing $2$ to the size.
\item Each binary rule (size $3$) produces two prefix rules: one for $k\!=\!1$ of size $2$ and one for $k\!=\!2$ of size $3$, contributing $2+3=5$ to the size.
\end{itemize}
In these cases, the ratio of the prefix contribution to the original rule size is at most $5/3$ (attained by binary rules).
So the total contribution of \cref{eq:prefix-grammar-optimized-borderline-exit} is at most $\frac{5}{3}|\grammar|$.
\end{itemize}
Summing: $|\prefixGrammar| \le |\grammar| + 3 + \frac{5}{3}|\grammar| = \frac{8}{3}|\grammar| + 3$.\looseness=-1
\end{proof}

\end{document}